\documentclass[11pt]{article}

\usepackage{fullpage}
\usepackage{amsmath,amssymb,amsthm}
\usepackage{mathtools}
\usepackage{booktabs}
\usepackage{tikz}
\usetikzlibrary{arrows.meta,calc,decorations.pathreplacing,positioning}
\colorlet{okblue}{blue!55!black}
\colorlet{nored}{red!65!black}
\tikzset{
  slab/.style={fill=okblue,fill opacity=0.09,draw=none},
  slabedge/.style={okblue,line width=0.6pt},
  qpath/.style={line width=1.3pt,nored},
  ghost/.style={black!45,dash pattern=on 1pt off 1.6pt,line width=0.8pt},
  lbl/.style={font=\footnotesize}}
\usepackage[colorlinks=true,linkcolor=blue,citecolor=blue,urlcolor=blue]{hyperref}
\usepackage[font={small,it}]{caption}

\theoremstyle{plain}
\newtheorem{theorem}{Theorem}[section]
\newtheorem{lemma}[theorem]{Lemma}
\newtheorem{proposition}[theorem]{Proposition}
\newtheorem{corollary}[theorem]{Corollary}
\theoremstyle{definition}
\newtheorem{definition}[theorem]{Definition}

\theoremstyle{remark}
\newtheorem{remark}[theorem]{Remark}
\newtheorem{question}[theorem]{Open Problem}

\DeclareMathOperator{\tr}{tr}
\DeclareMathOperator{\Reg}{Reg}

\DeclareMathOperator{\adj}{adj}
\DeclareMathOperator{\sign}{sign}
\DeclareMathOperator*{\argmax}{arg\,max}

\newcommand{\R}{\mathbb{R}}
\newcommand{\Z}{\mathbb{Z}}
\newcommand{\E}{\mathbb{E}}
\newcommand{\Ball}{\mathbb{B}}
\newcommand{\wstar}{w^{\ast}}
\newcommand{\ip}[2]{\langle #1, #2\rangle}

\title{\vspace{-1.5cm}\bf Optimal and Efficient Online Inverse Optimization}
\author{
\parbox{0.95\textwidth}{\centering
Anupam Gupta\thanks{Also with New York University.},
Guru Guruganesh,
Honghao Lin,\\
Vahab Mirrokni,
Renato Paes Leme,
David P.~Woodruff\thanks{Also with Carnegie Mellon University.}
}\\[1.5em]
\normalsize Google Research
}
\date{}

\begin{document}
\maketitle
\vspace{-1.2em}

\begin{abstract}
In online inverse linear optimization, a learner recommends an action and then observes
the choice of an expert who maximizes a fixed, unknown linear objective on $\R^{d}$; the goal is to
learn to optimize this objective without observing it. Sakaue recently obtained the optimal regret
$O(\sqrt d)$ with a randomized algorithm making $(dT)^{O(d)}$ linear optimizations per round, and
asked whether it can be attained in polynomial time. We answer positively: our deterministic
algorithm has regret $O(\sqrt d)$ for every horizon $T$ and runs in time polynomial in $d$ and $T$.
It is a variant of the variable-metric algorithms of Sakaue et al.\ and Cai et al., in which a
metric update is revoked once the query point moves far enough from where the update was made.
\end{abstract}

\section{Introduction}\label{sec:intro}

Consider a resident who learns by watching the decisions of an attending physician on a hospital
ward. Every patient comes with a handful of possible interventions --- this drug or another, operate
now or wait --- and the attending picks one. The reasoning behind the pick is seldom spelled out: it
balances how well a treatment works against what it does to the patient, and how fast recovery
comes against what it costs. The resident never sees the criterion the attending uses, only the
intervention chosen, one patient at a time. The goal is for the resident's recommendations to become
as good as the attending's in the long run: each recommendation falls short of the attending's
choice by some amount, measured by the attending's own criterion, and we want the total shortfall
over all patients to stay small. This is the problem of online inverse linear optimization.

We represent the attending's criterion by a fixed vector $\wstar$. Each intervention is a vector of
$d$ attributes, and the interventions available for the $t$-th patient form a set
$X_t\subseteq\R^{d}$. The resident proposes one, and is then shown the one the attending chose, a
maximizer of $\ip{\wstar}{\cdot}$ --- and nothing more: not the criterion, not what any option is
worth, not the size of the mistake. As in \cite{Cai26}, this leads to the following problem.

\smallskip
\noindent\textit{Online inverse linear optimization.} Fix a hidden $\wstar\in\Ball$. For
$t=1,2,\dots$:
\begin{enumerate}
\itemsep0pt \parskip0pt \topsep2pt
\item a compact nonempty action set $X_t\subseteq\Ball$ is revealed, chosen adversarially as a
      function of the history;
\item the learner recommends an action $\hat x_t\in X_t$;
\item the expert takes $x_t\in\argmax_{x\in X_t}\ip{\wstar}{x}$, which the learner then observes.
\end{enumerate}
\smallskip

Here $\Ball$ is the unit ball of $\R^{d}$ and $\|\cdot\|$ the Euclidean norm. The learner is charged
the cumulative shortfall of its recommendations, in the expert's own currency,
\begin{equation}\label{eq:regret}
  R_T\;:=\;\sum_{t=1}^{T}\ip{\wstar}{x_t-\hat x_t}\;\ge\;0 ,
\end{equation}
and never observes a single term of it. A bound is \emph{uniform in $T$} if it does not depend on
the horizon; it then bounds the total shortfall ever incurred. A learner is \emph{proper} if it
commits to an objective before it sees the menu: at the start of round $t$ it holds an estimate
$\hat w_t\neq0$ of $\wstar$ and recommends $\hat x_t\in\argmax_{x\in X_t}\ip{\hat w_t}{x}$, at the
cost of one linear optimization over $X_t$. It is \emph{improper} if $\hat x_t$ may be any function
of $X_t$ and the history. A proper learner carries an explicit hypothesis about the attending's
criterion, and each of its proposals is the best one under that hypothesis.

This paper is about the dimension. Bounds uniform in $T$ are now known (Table~\ref{tab:oilo}), so
the horizon is no longer the obstacle; what remains is how the total shortfall grows with $d$, the
number of attributes the criterion weighs. There is a floor: an adversary that offers the two
actions $\pm e_i$ in round $i\le d$ and draws the signs of $\wstar\in\{\pm1/\sqrt d\}^{d}$ at random
makes the better action of each round independent of everything seen before, so every learner,
even a randomized one, pays $\sqrt d$ in expectation \cite{SakaueONS,SakaueMMW}. We ask for a
learner that pays only that much and is also proper and \emph{efficient}, with
$\mathrm{poly}(d,T)$ arithmetic operations per round.

\begin{table}[t]
\centering\scriptsize\setlength{\tabcolsep}{4pt}
\begin{tabular}{@{}llllll@{}}
\toprule
Method & Regret & Uniform in $T$ & Proper & Deterministic & Per-round cost \\
\midrule
B\"armann et al.\ \cite{Barmann17,Barmann20} (OGD/MWU) & $O(\sqrt T)$ & no & yes & yes &
  $O(\tau_{\mathrm{sol}}+d)$ \\
Besbes et al.\ \cite{Besbes21,Besbes25} (circumcenter) & $O(d^{4}\ln T)$ & no & yes & yes &
  $\tau_{\mathrm{sol}}+\mathrm{poly}(d,T)$ \\
Gollapudi et al.\ \cite{GGKMPS} (centroid) & $O(d\ln T)$ & no & yes & yes &
  $\tau_{\mathrm{sol}}+\mathrm{poly}(d,T)$ \\
Gollapudi et al.\ \cite{GGKMPS} (John ellipsoid) & $\exp(O(d\ln d))$ & yes & yes & yes &
  $\tau_{\mathrm{sol}}+\mathrm{poly}(d,T)$ \\
Sakaue et al.\ \cite{SakaueONS} (online Newton step) & $O(d\ln T)$ & no & yes & yes &
  $O(\tau_{\mathrm{sol}}+d^{2}+\tau_{\mathrm{proj}})$ \\
Sakaue \cite{SakaueSOP} (second-order perceptron) & $O(d\ln T)$ & no & yes & yes &
  $O(\tau_{\mathrm{sol}}+d^{2})$ \\
Dewasurendra \cite{Dewasurendra26} (multiscale vote) & $O(d)$ & yes & no & yes & $T^{\Theta(d)}$ \\
Cai et al.\ \cite{Cai26} (variable metric) & $O(d)$ & yes & yes & yes &
  $O(\tau_{\mathrm{sol}}+d^{2})$ \\
Sakaue \cite{SakaueMMW} (multiscale matrix weights) & $O(\sqrt d)^{\dagger}$ & yes & no & no &
  $(dT)^{O(d)}\,\tau_{\mathrm{sol}}$ + finite \\
\midrule
\textbf{This paper} (revocable variable metric) & $\mathbf{O(\sqrt d)}$ & \textbf{yes} &
  \textbf{yes} & \textbf{yes} & $O(\tau_{\mathrm{sol}}+td^{2}+t^{2}d)$ \\
\quad with skipping (Prop.~\ref{prop:small}) & $\mathbf{O(\sqrt d)}$ & \textbf{yes} &
  \textbf{yes} & \textbf{yes} & amort.\ $O(\tau_{\mathrm{sol}}+d^{2}\log T)$ \\
\addlinespace
Lower bound \cite{SakaueONS}; \cite[Thm.~3.1]{Cai26} & $\Omega(\sqrt d)$ & --- & --- & --- & --- \\
\bottomrule
\end{tabular}
\caption{Regret $R_T$ of online inverse linear optimization on the unit ball. $\tau_{\mathrm{sol}}$:
one linear optimization over $X_t$; $\tau_{\mathrm{proj}}$: one Mahalanobis projection;
$^{\dagger}$in expectation.}
\label{tab:oilo}
\vspace{-1em}
\end{table}

\paragraph{History.} The online formulation is due to B\"armann, Pokutta and Schneider
\cite{Barmann17,Barmann20}, who obtained $O(\sqrt T)$ by online gradient descent and multiplicative
weights. Besbes, Fonseca and Lobel \cite{Besbes21,Besbes25} obtained $O(d^{4}\ln T)$ by a
circumcenter rule, ProjectedCones. Gollapudi, Guruganesh, Kollias, Manurangsi, Paes Leme and
Schneider \cite{GGKMPS}, who studied the same protocol as \emph{contextual recommendation},
obtained $O(d\ln T)$ by a centroid cutting-plane rule and $\exp(O(d\ln d))$, uniform in $T$, from
the John ellipsoid. Sakaue, Tsuchiya, Bao and Oki \cite{SakaueONS} obtained $O(d\ln T)$ at
$O(d^{2})$ per round plus a Mahalanobis projection by an online Newton step, together with the
lower bound above, and Sakaue \cite{SakaueSOP} kept $O(d\ln T)$ and removed the projection by a
second-order perceptron. Under structural assumptions the regret is finite: Sakaue, Bao and
Tsuchiya \cite{SakaueFY} assume that the gap in value of every action is at least a fixed multiple
of its distance to the optimal one, Oki and Sakaue \cite{OkiSakaue} that every $X_t$ is
$\mathrm{M}$-convex, as the bases of a matroid are, and Kitaoka \cite{Kitaoka26} obtains finite
regret and finitely many mistakes for the online Newton step on uniformly bounded integer action
sets, for $\wstar$ in the probability simplex and unique optimal actions; that bound carries a
factor $d^{2}$ and depends on the range of the coordinates. Whether a finite $\mathrm{poly}(d)$
bound holds for arbitrary action sets was asked in \cite{GGKMPS,OkiSakaue}. Dewasurendra
\cite{Dewasurendra26} answered it with $O(d)$, uniform in $T$: at every dyadic scale he covers the
class of optimality-gap functions, pools the resulting tolerant tests into one weighted vote and
plays the action the vote prefers --- an improper rule with $T^{\Theta(d)}$ tests per round. Cai,
Gupta, Gupta, Guruganesh, Jiang, Liaw, Mehta, Paes Leme, Velegkas and Wang \cite{Cai26} obtained
$O(d)$, precisely $\Reg_T\le6\sqrt3\,dR$ in the game below, with a deterministic proper rule at
$O(d^{2})$ per round: a variable-metric step with a self-normalized rank-one update, analyzed
through the trace power $\tr(H^{-1/2})$. Sakaue \cite{SakaueMMW} then reached the optimal order,
$\E[R_T]\le2^{21}\sqrt d$ for every $T$, without knowledge of $T$ and for arbitrary compact action
sets and ties, by matrix multiplicative weights on polynomial feature spaces at geometrically spaced
scales. The learner is randomized and improper --- it solves a linear program for a distribution
over a list of actions and samples from it --- and in round $t$ its finest scale acts on polynomials
of degree $\Theta(t^{4})$ in $d$ variables, with $(dT)^{O(d)}$ linear optimizations over $X_t$ in
its rational implementation.
Sakaue proves that every round terminates, gives no polynomial bound on the running time, and asks
whether $\sqrt d$ is attainable in time polynomial in the dimension, the horizon and the input
length \cite[Section~7]{SakaueMMW}. So far, then, $\sqrt d$ has been reached only by an
improper, randomized rule with no polynomial bound, and $O(d)$ is the best bound known for a proper
learner, for a deterministic one, and for an efficient one.

The problem is the online face of an older one --- recover an objective from observed optimal
decisions --- initiated by Burton and Toint \cite{BurtonToint} for shortest paths and cast in
linear-programming form by Ahuja and Orlin \cite{AhujaOrlin}; see \cite{ChanSurvey} for a survey.
Offline work turns on the choice of loss \cite{Keshavarz11,Bertsimas15,MohajerinEsfahani18,
Zattoni25}, on noisy \cite{Aswani18} or partially specified \cite{Ren25} data, and on statistical
rates \cite{Fatemi26}; applications run from control \cite{Akhtar21} and routing
\cite{ZattoniRouting} to offline reinforcement learning \cite{Dimanidis25}.

\paragraph{Reduction to a cutting-plane game.} As in \cite{Cai26}, we work with the following
primitive, isolated in \cite{GGKMPS}.

\smallskip
\noindent\textit{Cutting planes with a strong separation oracle.} A point $\wstar$ is hidden in a
known compact convex body $K_1\subseteq\R^{d}$; let $R:=\max_{w\in K_1}\|w\|$. For $t=1,2,\dots$:
\begin{enumerate}
\itemsep0pt \parskip0pt \topsep2pt
\item the learner queries a point $p_t\in\R^{d}$;
\item an adversary, having seen $p_t$ and the history, returns a unit vector $v_t$ subject only to
      $r_t:=\ip{\wstar-p_t}{v_t}\ge0$.
\end{enumerate}
The regret is $\Reg_T:=\sum_{t\le T}r_t$, a sum of nonnegative losses; the oracle is \emph{strong}
in that $v_t$ may be chosen after seeing $p_t$.
\smallskip

A learner for this game on $K_1=\Ball$ gives a proper learner for online inverse linear
optimization: play $\hat w_t:=p_t$ (any fixed nonzero vector if $p_t=0$), recommend
$\hat x_t\in\argmax_{x\in X_t}\ip{\hat w_t}{x}$, and return $v_t:=\delta_t/\|\delta_t\|$ for
$\delta_t:=x_t-\hat x_t$, skipping the rounds with $\delta_t=0$, which cost nothing. Optimality of
$\hat x_t$ and of $x_t$ gives $\ip{p_t}{\delta_t}\le0\le\ip{\wstar}{\delta_t}$, so
$r_t=\ip{\wstar-p_t}{\delta_t}/\|\delta_t\|\ge0$ and the round is legal, and as $\|\delta_t\|\le2$
the shortfall is $\ip{\wstar}{\delta_t}\le\ip{\wstar-p_t}{\delta_t}=\|\delta_t\|\,r_t\le2r_t$. Hence
$R_T\le2\Reg_T$; the expert need only be, round by round, at least as good as the recommendation
under its own criterion.

\subsection{Our Results}\label{sec:results}

We solve the cutting-plane game at the optimal rate, deterministically and in polynomial time.

Our main result is an algorithm which, given only the radius $R=\max_{w\in K_1}\|w\|$, has regret
$\Reg_T=O(R\sqrt d)$ for every horizon $T$, every hidden $\wstar\in K_1$
and every strong separation oracle, and whose round $t$ takes $O(td^{2}+t^{2}d)$ arithmetic
operations, $O(d^{2}+Td)$ per round on average over the first $T$ rounds
(Theorem~\ref{thm:rvm}). The bound is uniform in $T$, hence holds for $T=\infty$, and the algorithm
is deterministic and anytime: it needs neither $T$ nor restarts. Through the reduction above, with
$K_1=\Ball$, it gives a deterministic proper learner for online inverse linear optimization with
$R_T=O(\sqrt d)$ for every horizon and every adaptive sequence of action sets,
using one linear optimization over $X_t$ and the same number of arithmetic operations per round
(Corollary~\ref{cor:oilo}).

Both bounds are optimal: every learner, deterministic or randomized, suffers regret $\Omega(R\sqrt
d)$ in the game on $K_1=R\Ball$ \cite[Theorem~3.1]{Cai26}, and $\Omega(\sqrt d)$ in expectation in
online inverse linear optimization for every $T\ge d$ \cite{SakaueONS,SakaueMMW}, so the optimal
regret is $\Theta(R\sqrt d)$ and $\Theta(\sqrt d)$, respectively. Ours is the first $\sqrt d$ bound
for online inverse linear optimization that is deterministic, the first that is proper, and the
first with a polynomial bound on its computation, which answers the question of \cite{SakaueMMW}.
Our cost exceeds the $O(d^{2})$ of \cite{Cai26} and grows with $t$. A variant that skips every
stretch with $s_t^{2}\le(t+1)^{-4}$ keeps only $O(d\log t)$ stretches active, at an amortized
$O(d^{2}\log T)$ arithmetic operations per round and $O(d^{3}\log^{2}t)$ in the worst round, with
regret still $O(R\sqrt d)$ (Proposition~\ref{prop:small}).

The main body works in the real-RAM model, counting arithmetic operations on real numbers, and it is
self-contained: nothing in it depends on the appendix. Appendix~\ref{sec:bits} counts bit
operations instead: a rounded variant of the algorithm, which reads each reply to finite precision,
keeps regret $O(R\sqrt d)$ and uses a number of bit operations per round polynomial in $d$, $\log t$
and the encoding length of $R$ (Theorem~\ref{thm:bits}).

\paragraph{The algorithm.} Our starting point is the variable-metric algorithm introduced for this
problem by Sakaue, Tsuchiya, Bao and Oki \cite{SakaueONS} and Sakaue \cite{SakaueSOP}, and later
made horizon-independent by Cai et al.\ \cite{Cai26}. In the version of \cite{Cai26}, from $p_1=0$
and $H_1=I$ it updates
\begin{equation}\label{eq:vm}
  s_t=\|v_t\|_{H_t^{-1}},\qquad H_{t+1}=H_t+\frac{\tau}{s_t}\,v_tv_t^{\top},\qquad
  p_{t+1}=p_t+\alpha H_{t+1}^{-1}v_t ,
\end{equation}
with $\tau=\Theta(1/d)$ and $\alpha=\Theta(R/d)$, where $\|x\|_H:=(x^{\top}Hx)^{1/2}$.
It is a \emph{variable-metric} method: the positive definite matrix $H_t$ is a metric that records
what has been learned about $\wstar$, and each round \emph{stretches} it along the direction $v_t$
just received, to indicate that more is now known along $v_t$; the steps $H^{-1}v$ are then long in
directions not yet explored and short in explored ones. The analysis of \cite{Cai26} controls the
distance $\|p_t-\wstar\|_{H_t}$ in the current metric. Every stretch inflates that distance and is
never undone: this forces us to be conservative and maintain small stretches---i.e., to keep $\tau=\Theta(1/d)$---which ends up incurring a regret that is 
$\sqrt d$ larger than the best possible.

We also maintain a metric, but a stretch is no longer permanent: it is \emph{revoked} as soon as the query
moves far along its direction from the query point at which it was made. The learner keeps a set
$A_t$ of \emph{active} stretches, and its metric is
\[
  H_t\;=\;I+\sum_{i\in A_t}\gamma_i\,v_iv_i^{\top} .
\]
The stretch $i$ made at the end of round $i$ specifies a weight $\gamma_i$ and an associated \emph{slab} $L_i$ (consisting of the
set of points within $4\alpha s_i^{2}$ of $p_{i+1}$ along the direction $v_i$); stretch $i$ remains active only while the
query belongs to slab $L_i$. In round $t$ the query moves continuously along $H^{-1}v_t$ for a time $\alpha$, where 
$H$ corresponds to the currently active stretches; the moment the query leaves the slab of an active
stretch, that stretch is revoked and the direction changes correspondingly. At the end of the round a new stretch is
added to $A_t$. Formally, the algorithm is the following.

\begin{definition}[Algorithm RVM$(\alpha)$]\label{def:alg}
Start from $p_1=0$ and $A_1=\emptyset$. In round $t=1,2,\dots$, query $p_t$ and receive $v_t$. Then
follow the continuous path $\hat p_t:[0,\alpha]\to\R^{d}$ with right derivative given by
\begin{equation}\label{eq:move}
\begin{gathered}
  \hat p_t(0)=p_t,\qquad \hat p_t'(\eta)=\hat H_t(\eta)^{-1}v_t,\qquad
  \hat H_t(\eta)=I+\sum_{i\in\hat A_t(\eta)}\gamma_i\,v_iv_i^{\top},\\
  \hat A_t(\eta)=\bigl\{i\in A_t:\ \hat p_t(\tilde\eta)\in L_i\
    \text{for all}\ 0\le\tilde\eta\le\eta\bigr\},
\end{gathered}
\end{equation}
and set
\begin{equation}\label{eq:stretch}
\begin{gathered}
  p_{t+1}=\hat p_t(\alpha),\qquad s_t=\|v_t\|_{\hat H_t(\alpha)^{-1}},\qquad
  \gamma_t=\frac{1}{8s_t^{2}},\qquad
  L_t=\bigl\{p\in\R^{d}:\ |\ip{p-p_{t+1}}{v_t}|<4\alpha s_t^{2}\bigr\},\\
  A_{t+1}=\hat A_t(\alpha)\cup\{t\},\qquad H_{t+1}=\hat H_t(\alpha)+\gamma_t\,v_tv_t^{\top} .
\end{gathered}
\end{equation}
We run it with $\alpha=R/(3\sqrt d)$.
\end{definition}

The term $\hat A_t(\eta)$ is the set of
stretches whose slab the path has not yet left, so a stretch is revoked the moment the path reaches
the boundary of its slab, at $\eta=\alpha$ included, and never returns; without revocations the path
is a single segment and $p_{t+1}=p_t+\alpha H_t^{-1}v_t$, a variable-metric step like that of
\eqref{eq:vm}. By
Lemma~\ref{lem:wellposed} the path is well defined and piecewise linear, with at most $|A_t|+1$
pieces.
% The number $s_t\in(0,1]$ is the dual length of $v_t$ in the metric left after the move. The
% weight makes $\gamma_ts_t^{2}=\tfrac18$, so by Sherman--Morrison the stretch lowers
% $\|v_t\|^{2}_{H^{-1}}$ from $s_t^{2}$ to $\tfrac89s_t^{2}$: a constant-factor cut, as in the
% ellipsoid method, but of the metric rather than of a feasible set. Compared with \eqref{eq:vm}, the
% stretch has the same self-normalized form, with weight $\tau/s_t^{2}$ at the constant
% $\tau=\tfrac18$ in place of $\tau/s_t$ at $\tau=\Theta(1/d)$, and it comes with a slab that can
% revoke it. The half-width $4\alpha s_t^{2}$ of $L_t$ is four times the largest distance
% $\alpha s_t^{2}$ that the query can travel along $v_t$ in round $t$ (Lemma~\ref{lem:path}), and the
% product $\gamma_t\cdot4\alpha s_t^{2}=\alpha/2$ of weight and half-width is the same for every
% stretch; these two facts are what the analysis uses.

Every quantity the update uses is observable
--- the queries, the directions received and the stretches made from them --- and neither the loss
$r_t$ nor $\wstar$ appears. The learner is proper, since its queries are its estimates, and uses no
randomness. Keeping $H^{-1}$ by Sherman--Morrison, round $t$ costs $O((k_t+1)(d^{2}+|A_t|d))$
operations, where $k_t$ is the number of stretches it revokes (Remark~\ref{rem:cost}); as
$|A_t|\le t-1$ and each stretch is revoked at most once, this gives the counts of
Theorem~\ref{thm:rvm}.

\begin{figure}[t]
\centering
\begin{tikzpicture}[scale=1.45,>={Stealth[length=5pt]}]
  % alpha = 1, drawn at 8x. Stretch i: v_i = e_2, s_i^2 = 1/24 (gamma_i = 3, half-width 1/6).
  % Stretch j: v_j at 25 degrees, s_j^2 = 1/16 (gamma_j = 2, half-width 1/4). v_t at 55 degrees.
  \begin{scope}
    \clip (-0.9,-0.9) rectangle (3.3,3.55);
    \fill[black!10] ($(1.359,0.634)+10*(-0.4226,0.9063)$) -- ($(1.359,0.634)-10*(-0.4226,0.9063)$)
       -- ($(-2.265,-1.056)-10*(-0.4226,0.9063)$) -- ($(-2.265,-1.056)+10*(-0.4226,0.9063)$)
       -- cycle;
    \fill[slab] (-0.9,-2.0) rectangle (3.3,0.667);
    \draw[slabedge] (-0.9,0.667) -- (3.3,0.667);
    \draw[black!55,line width=0.6pt] (2.727,-2.3) -- (-0.024,3.6);
  \end{scope}
  \node[lbl,okblue,anchor=south east] at (3.25,-0.85) {$L_i$};
  \draw[->,okblue,line width=0.6pt] (2.55,-0.8) -- (2.55,-0.35);
  \node[lbl,okblue,anchor=east] at (2.52,-0.55) {$v_i$};
  \node[lbl,black!65,anchor=north west] at (-0.85,3.5) {$L_j$};
  \draw[->,black!65,line width=0.6pt] (-0.75,2.75) -- ($(-0.75,2.75)+(25:0.5)$);
  \node[lbl,black!65,anchor=north] at (-0.5,2.72) {$v_j$};
  \draw[->,black!80,line width=0.6pt] (2.45,1.8) -- ($(2.45,1.8)+(55:0.6)$);
  \node[lbl,black!80,anchor=west] at ($(2.45,1.8)+(55:0.3)$) {$v_t$};
  % the single step with the metric frozen
  \draw[ghost] (0,0) -- (1.370,1.263);
  \fill[black!45] (1.370,1.263) circle (0.04);
  \node[lbl,black!55,anchor=west] at (1.42,1.22) {$p_t+\alpha H_t^{-1}v_t$};
  % the path of the move
  \draw[qpath] (0,0) -- (0.723,0.667) -- (0.821,1.788);
  \draw[qpath,->] (0.821,1.788) -- (1.870,3.285);
  \fill (0,0) circle (0.045);
  \node[lbl,anchor=north] at (0,-0.07) {$p_t$};
  \draw[line width=0.6pt,fill=white] (0.723,0.667) circle (0.06);
  \node[lbl,anchor=north west] at (0.76,0.64) {$q_i$};
  \draw[line width=0.6pt,fill=white] (0.821,1.788) circle (0.06);
  \node[lbl,anchor=west] at (0.88,1.8) {$q_j$};
  \fill (1.870,3.285) circle (0.045);
  \node[lbl,anchor=west] at (1.94,3.285) {$p_{t+1}$};
\end{tikzpicture}
\caption{One move of RVM$(\alpha)$ in the plane, to scale ($\alpha=1$). The path (red) leaves the
slab $L_i$ at $q_i$ and the slab $L_j$ at $q_j$, turning at each revocation; the dotted segment is
the single step with the metric frozen.}
\label{fig:round}
\end{figure}

Figure~\ref{fig:round} gives the geometric picture. Two stretches $i$ and $j$ are active at $p_t$,
and each shortens the step along its own direction, so the path leaves $p_t$ bent away from $v_t$.
At $q_i$ it reaches the edge of $L_i$: stretch $i$ is revoked, the metric loses the 
$\gamma_iv_iv_i^{\top}$ term, and the path turns correspondingly. At $q_j$ it leaves $L_j$, stretch $j$ is revoked too,
and the path continues along $v_t$ itself until time $\alpha$; stretch $t$ is then
defined, with its slab centered at $p_{t+1}$ (its half-width $4\alpha s_t^{2}=4$ is off the picture). As a comparison the dotted segment is the single step $p_t+\alpha H_t^{-1}v_t$ taken by \cite{Cai26}.
% \anupam{The dotted segment is the single step $p_t+\alpha H_t^{-1}v_t$ with the metric frozen. It ends
% outside both slabs, so revoking there would refund the two stretches at points beyond their
% half-widths, which the analysis cannot afford (Remark~\ref{rem:why}).}
We note that the slabs are not knowledge
sets: no point is ruled out, and $\wstar$ may lie anywhere.

\paragraph{Organization.} Section~\ref{sec:analysis} states the results precisely and proves them.
After checking that the move is well defined and counting its cost, it bounds the change of
$\|p-\wstar\|_{H}^{2}$ over one move (Lemma~\ref{lem:path}), shows that over the lifetime of a stretch
its penalty is refunded up to a term linear in the loss (Lemma~\ref{lem:lifetime}), and pays for the
stretches that are never revoked with a trace potential (Lemma~\ref{lem:potential}).
Section~\ref{sec:further} shows why revocation is needed and why it happens exactly on the slab
boundary, and gives a variant with $O(d\log t)$ active stretches. Section~\ref{sec:open} concludes.
Appendix~\ref{sec:bits} treats finite precision and bit complexity.

\section{Analysis}\label{sec:analysis}

We state the main results precisely and prove them. The first is the regret bound and cost of
RVM$(\alpha)$.

\begin{theorem}[Regret of RVM$(\alpha)$]\label{thm:rvm}
Let $R=\max_{w\in K_1}\|w\|$ and run RVM$(\alpha)$ with $\alpha=R/(3\sqrt d)$. Then for every $T$,
every $\wstar\in K_1$ and every strong separation oracle,
\[
  \Reg_T\;\le\;\frac{51}{8}\,R\sqrt d\;<\;6.4\,R\sqrt d
  \qquad\text{and}\qquad
  \|p_{T+1}-\wstar\|^{2}\;\le\;\frac{17}{8}\,R^{2} .
\]
Round $t$ uses $O(td^{2}+t^{2}d)$ arithmetic operations, and the first $T$ rounds use
$O(d^{2}+Td)$ per round on average.
\end{theorem}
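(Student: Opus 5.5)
We track the potential $\Phi_t:=\|p_t-\wstar\|_{H_t}^{2}$ and aim to show that it changes along each move roughly by $-2\alpha r_t$ plus bounded penalties. Write $\delta:=\hat p_t(\eta)-\wstar$ along the move, with $\hat H=\hat H_t(\eta)$.

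\textbf{The move.} On each linear piece the metric is constant, so
\[
\tfrac{d}{d\eta}\|\delta\|_{\hat H}^{2}=2\ip{\delta}{v_t}=-2\,\ip{\wstar-\hat p_t(\eta)}{v_t}.
\]
The quantity $\ip{\hat p_t(\eta)-p_t}{v_t}$ grows at rate $\|v_t\|^{2}_{\hat H^{-1}}\le1$. Integrating therefore gives a decrease of at least $2\alpha r_t$, up to a second-order term of size at most $\int_0^\alpha 2\|v_t\|^2_{\hat H(\eta)^{-1}}\eta\,d\eta$. Since the metric only loses terms during the move, $\|v_t\|_{\hat H^{-1}}^2\le s_t^2$, and this term is at most $\alpha^{2}s_t^{2}$.

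At a revocation of stretch $i$ at the point $q_i$, the potential drops by $\gamma_i\ip{q_i-\wstar}{v_i}^{2}$. This is a gain we may drop for now and use later.

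At the end of the round, adding stretch $t$ raises the potential by $\gamma_t\ip{p_{t+1}-\wstar}{v_t}^{2}$. I would bound it by splitting $\ip{p_{t+1}-\wstar}{v_t}=-r_t+\ip{p_{t+1}-p_t}{v_t}$ and using $|\ip{p_{t+1}-p_t}{v_t}|\le\alpha s_t^{2}$. This should be the content of Lemma~\ref{lem:path}.

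\textbf{Lifetime accounting (main obstacle).} The penalty $\gamma_i\ip{p_{i+1}-\wstar}{v_i}^{2}$ can be as large as $\gamma_i r_i^2\approx r_i^2/(8s_i^2)$, and this is not summable on its own. The key observation is that when stretch $i$ is revoked at $q_i$ on the slab boundary, we recover $\gamma_i\ip{q_i-\wstar}{v_i}^{2}$. Since $\ip{q_i-p_{i+1}}{v_i}=\pm4\alpha s_i^{2}$, the difference between the refund and the penalty is
\[
\gamma_i\bigl(\ip{q_i-\wstar}{v_i}^{2}-\ip{p_{i+1}-\wstar}{v_i}^{2}\bigr)=\gamma_i\,(\pm4\alpha s_i^2)\bigl(2\ip{p_{i+1}-\wstar}{v_i}\pm4\alpha s_i^2\bigr).
\]
Because $\gamma_i\cdot4\alpha s_i^{2}=\alpha/2$, this equals $\pm\alpha\ip{p_{i+1}-\wstar}{v_i}+2\alpha^2 s_i^2$. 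The hard part is the sign: if the path exits on the side away from $\wstar$, the refund exceeds the penalty. If it exits toward $\wstar$, the net cost is $O(\alpha(r_i+\alpha s_i^2))$. So each revoked stretch costs only a term linear in $r_i$ plus $O(\alpha^2 s_i^2)$ (Lemma~\ref{lem:lifetime}), and the linear term is absorbed into the $-2\alpha r_i$ gain because the coefficients work out to less than $2$. The delicate point is that the penalty uses $H$-weights while other stretches are being revoked concurrently; I would handle this by decomposing $\Phi$ coordinatewise as $\|\delta\|^2+\sum_{i\in A}\gamma_i\ip{\delta}{v_i}^2$. That decomposition makes each stretch's contribution separately traceable.

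\textbf{Surviving stretches and sum of $s_t^2$.} For the remaining terms I need $\sum_t s_t^{2}=O(d)$ over the active stretches. This is the trace-potential argument, Lemma~\ref{lem:potential}. By Sherman--Morrison each addition cuts $\tr(H^{-1})$ by $s_t^{2}\cdot\gamma_t/(1+\gamma_ts_t^2)=s_t^2/9\cdot\|H^{-1}v_t\|^2/s_t^2$. More naturally I would use $\log\det H$, which increases by $\log(9/8)$ per stretch, while a revocation decreases it. Balancing these, charging revoked stretches via their lifetime, gives $\sum s_t^2\le 9d$ or similar. With $\alpha=R/(3\sqrt d)$ the final inequality reads
\[
\alpha\,\Reg_T+\Phi_{T+1}\le\Phi_1+O(\alpha^{2}d)=R^{2}+O(R^2),
\]
so $\Reg_T=O(R\sqrt d)$ and $\|p_{T+1}-\wstar\|^2\le\Phi_{T+1}=O(R^2)$, and I would track the constants to reach $51/8$ and $17/8$.

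\textbf{Cost.} The cost follows from Remark~\ref{rem:cost}: at most $|A_t|+1\le t$ pieces per round, each computed with $O(d^2+|A_t|d)$ operations, since finding the first slab exit checks $|A_t|$ inner products. Summing with at most $T$ revocations in total gives the averaged bound $O(d^2+Td)$.
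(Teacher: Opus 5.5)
There is a genuine gap in your bound $\sum_{t\in A_{T+1}}s_t^2=O(d)$, and this is the step that needs a new idea. Neither potential you suggest works for the metric in use. First, $\tr(H^{-1})$ does drop by at least $s_t^2/9$ when stretch $t$ is made, but every revocation raises it again, so it does not telescope. Second, $\ln\det H$ rises by exactly $\ln\frac98$ per stretch, so it counts stretches rather than weighing them by $s_t^2$. It also falls by up to $\ln(1+\gamma_i)$ at a revocation, and it has no $T$-independent upper bound because the weights $\gamma_t=1/(8s_t^2)$ are unbounded. Where the paper does run the determinant argument (Proposition~\ref{prop:small}), it must cap the weights and still gets only $|A_t|=O(d\log t)$. ``Charging revoked stretches via their lifetime'' does not close this. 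The missing idea is to leave revoked stretches out of the potential entirely. Put $\bar H_t:=I+\sum_{i\in A_{T+1},\,i<t}\gamma_iv_iv_i^{\top}$, which only grows. A surviving stretch $i<t$ is never revoked, so it is active throughout round $t$. Hence $\bar H_t\preceq\hat H_t(\alpha)$ and $\bar s_t:=\|v_t\|_{\bar H_t^{-1}}\ge s_t$. Now combine Sherman--Morrison, Cauchy--Schwarz ($\|\bar H_t^{-1}v_t\|\ge\bar s_t^2$) and $\gamma_t\ge1/(8\bar s_t^2)$. They show that $\tr(\bar H^{-1})$ falls by at least $\bar s_t^2/9\ge s_t^2/9$ at each surviving stretch and never rises, and telescoping from $d$ gives $9d$.

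Your accounting of the surviving stretches is also incomplete. You show how the quadratic penalty $\gamma_t(r_t-\xi_t)^2$ is cancelled only for revoked stretches, yet your final inequality keeps $\Phi_{T+1}=\|p_{T+1}-\wstar\|_{H_{T+1}}^2$ on the left. Then the surviving stretches' penalties stand uncancelled on the right, and they are exactly the non-summable terms you identified. The fix is to treat the surviving stretch's own term $\gamma_t\ip{\wstar-p_{T+1}}{v_t}^2$ inside $\Phi_{T+1}$ as its refund, i.e.\ take $q_t:=p_{T+1}$. Then use that the query never left $L_t$, so $|\Delta_t|<4\alpha s_t^2$. The part linear in $r_t$ is then at most $-\alpha r_t$, and completing the square bounds the rest by $\alpha^2s_t^2+\gamma_t\xi_t^2\le\frac98\alpha^2s_t^2$. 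This moves those terms to the right, so what you actually obtain is $\alpha\Reg_T+\|p_{T+1}-\wstar\|^2\le\frac{17}{8}R^2$, not a bound on $\Phi_{T+1}$. Likewise, for a revoked stretch the leftover must be nonpositive rather than ``$O(\alpha^2s_i^2)$'', since nothing controls $s_i^2$ summed over revoked stretches. Your identity gives this once you use $0\le\xi_i\le\alpha s_i^2$: $\gamma_i\Delta_i(\Delta_i+2\xi_i)\ge\frac{\alpha}{2}(4\alpha s_i^2-2\xi_i)\ge\alpha^2s_i^2$, which exactly cancels the $\alpha^2s_i^2$ of the move.
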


Through the reduction of Section~\ref{sec:intro}, the theorem gives our result for online inverse
linear optimization.

\begin{corollary}[Online inverse optimization]\label{cor:oilo}
There is a deterministic proper learner for online inverse linear optimization with
\[
  R_T\;\le\;\frac{51}{4}\sqrt d\;<\;13\sqrt d
\]
for every $T$, every $\wstar\in\Ball$ and every adaptive sequence of action sets. Its round $t$
uses one linear optimization over $X_t$ and $O(td^{2}+t^{2}d)$ arithmetic operations, and its first
$T$ rounds use $O(d^{2}+Td)$ arithmetic operations per round on average.
\end{corollary}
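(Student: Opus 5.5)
The corollary follows from Theorem~\ref{thm:rvm} through the reduction of Section~\ref{sec:intro}, so I will instantiate that reduction with $K_1=\Ball$ and then recheck the legality of each round, the regret transfer, and the cost accounting.

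\emph{Setup.} Take $K_1=\Ball$, so $R=1$, and run RVM$(\alpha)$ with $\alpha=1/(3\sqrt d)$.

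\emph{The learner.} In round $t$ of online inverse optimization:
\begin{enumerate}
\item Set $\hat w_t:=p_t$, replacing it by a fixed nonzero vector such as $e_1$ if $p_t=0$.
\item Recommend $\hat x_t\in\argmax_{x\in X_t}\ip{\hat w_t}{x}$; this is the one linear optimization, and it makes the learner proper.
\item Observe $x_t$ and put $\delta_t=x_t-\hat x_t$.
\item If $\delta_t\neq0$, feed $v_t=\delta_t/\|\delta_t\|$ to RVM as its next round.
\item If $\delta_t=0$, leave RVM untouched; its next query is still $p_t$.
\end{enumerate}
The rounds with $\delta_t\neq0$ therefore form a legitimate run of the cutting-plane game, with a possibly shorter horizon $T'\le T$.

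\emph{Legality of each fed round.} Suppose $\delta_t\neq0$.
\begin{itemize}
\item Because $\hat x_t$ maximizes $\ip{\hat w_t}{\cdot}$ and $x_t\in X_t$, we get $\ip{\hat w_t}{\delta_t}\le0$.
\item Because $x_t$ maximizes $\ip{\wstar}{\cdot}$, we get $\ip{\wstar}{\delta_t}\ge0$.
\item When $p_t\neq0$ we have $\hat w_t=p_t$, so $r_t=\ip{\wstar-p_t}{\delta_t}/\|\delta_t\|\ge0$.
\item When $p_t=0$, $r_t=\ip{\wstar}{\delta_t}/\|\delta_t\|\ge0$ directly from the second bullet, so substituting $e_1$ causes no trouble.
\end{itemize}
The replies are also adaptive to the history and to $p_t$, which the strong separation oracle permits. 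The adversary choosing $X_t$ adaptively is absorbed into this oracle.

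\emph{Regret transfer.} In both cases $\ip{p_t}{\delta_t}\le0$ (for $p_t=0$ trivially), so
\[
\ip{\wstar}{\delta_t}\le\ip{\wstar-p_t}{\delta_t}=\|\delta_t\|\,r_t\le2r_t,
\]
using $\|\delta_t\|\le2$ since $X_t\subseteq\Ball$. Summing and using that rounds with $\delta_t=0$ contribute nothing,
\[
R_T\le2\Reg_{T'}\le2\cdot\tfrac{51}{8}\sqrt d=\tfrac{51}{4}\sqrt d.
\]
The middle inequality is Theorem~\ref{thm:rvm}, valid for every horizon.

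\emph{Cost.} Round $t$ costs one linear optimization plus at most one RVM round, namely RVM's round $t''\le t$. That round takes $O(t''d^2+t''^2d)\subseteq O(td^2+t^2d)$ operations, and forming $\delta_t$ and $v_t$ takes $O(d)$. For the average, the first $T$ rounds include at most $T'\le T$ RVM rounds, whose total is $O(T'(d^2+T'd))\le O(T(d^2+Td))$ by the theorem.

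\emph{Determinism.} RVM is deterministic. For the whole learner to be deterministic, ties in the argmax must be broken by a fixed rule; any deterministic linear-optimization oracle suffices, so I note this rather than treat it.

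The only delicate step is the substitution when $p_t=0$, and it is handled by the legality argument above.
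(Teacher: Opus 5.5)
Your proposal is correct and follows the paper's own argument: run RVM$(\alpha)$ on $K_1=\Ball$ (so $R=1$) through the reduction of the introduction, skip the rounds with $\delta_t=0$ so the game's round index never exceeds $t$, and combine $R_T\le2\Reg_T$ with Theorem~\ref{thm:rvm} for both the regret bound and the cost bounds. Your separate checks of the $p_t=0$ substitution and of deterministic tie-breaking in the linear optimization are details the paper leaves implicit in the reduction.
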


\begin{proof}
Run RVM$(\alpha)$ on $K_1=\Ball$, so $R=1$, through the reduction of Section~\ref{sec:intro}, which
gives $R_T\le2\Reg_T$ and adds one linear optimization and $O(d)$ arithmetic operations per round,
one of them the square root that normalizes $v_t$. Rounds with $x_t=\hat x_t$ are not rounds of the
game, so the game's round index never exceeds $t$, and Theorem~\ref{thm:rvm} applies.
\end{proof}

The cost bounds of Theorem~\ref{thm:rvm} are counted in Remark~\ref{rem:cost}, once
Lemma~\ref{lem:wellposed} has checked that the move is well defined; the rest of the section proves
the regret bound.

The proof tracks the squared distance from the query to $\wstar$ in the metric in use,
$\|p-\wstar\|_{H}^{2}$ --- the quantity $\|p_t-\wstar\|_{H_t}^{2}$ of \cite{Cai26} --- through the
three kinds of events that change it: the moves, the making of stretches, and their revocation.
Lemma~\ref{lem:path} accounts for a move, \eqref{eq:ledger} regroups the whole run by stretch,
Lemma~\ref{lem:lifetime} bounds the contribution of each stretch over its lifetime, and
Lemma~\ref{lem:potential} bounds the total left over by the stretches that are never revoked.

\subsection{The move and its cost}\label{sec:wellposed}

The move \eqref{eq:move} defines the path and the set of active stretches in terms of each other.
The first lemma checks that exactly one path fits the definition and collects the facts about it
that the analysis uses.

\begin{lemma}[Well-posed move]\label{lem:wellposed}
In every round $t$ of RVM$(\alpha)$:
\begin{enumerate}
\itemsep0pt
\item[(a)] the slab of every active stretch contains the query: $p_t\in L_i$ for all $i\in A_t$, so
      that $\hat A_t(0)=A_t$ and $\hat H_t(0)=H_t$;
\item[(b)] exactly one path satisfies \eqref{eq:move}; it is piecewise linear with at most
      $k_t+1\le|A_t|+1$ pieces, where $k_t:=|A_t\setminus\hat A_t(\alpha)|$ is the number of
      stretches revoked in the round;
\item[(c)] $\hat H_t(\eta)$ is nonincreasing in $\eta$, so the dual length
      $\sigma(\eta):=\|v_t\|_{\hat H_t(\eta)^{-1}}$ is nondecreasing, and $0<\sigma(\eta)\le s_t\le1$;
\item[(d)] each stretch $i$ revoked in the round is revoked at a point $q_i$ of the path on the
      boundary of its slab: $|\ip{q_i-p_{i+1}}{v_i}|=4\alpha s_i^{2}$.
\end{enumerate}
\end{lemma}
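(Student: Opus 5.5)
The plan is to argue round by round. Part (a) comes from how $A_{t+1}$ is built at the end of the previous round. Parts (b)--(d) come from constructing the path explicitly, one linear piece at a time, and then checking that any path satisfying \eqref{eq:move} must agree with this construction.

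For (a), induct on $t$. At $t=1$ we have $A_1=\emptyset$, so there is nothing to check. Suppose $i\in A_{t+1}=\hat A_t(\alpha)\cup\{t\}$. If $i=t$, then $p_{t+1}$ is the center of the slab $L_t$, and $L_t$ has positive half-width $4\alpha s_t^{2}$ (since $s_t>0$ by (c) in round $t$), so $p_{t+1}\in L_t$. If $i\in\hat A_t(\alpha)$, the definition of $\hat A_t(\alpha)$ with $\eta=\tilde\eta=\alpha$ gives $p_{t+1}=\hat p_t(\alpha)\in L_i$. Hence every $i\in A_t$ has $p_t\in L_i$, so $\hat A_t(0)=A_t$ and $\hat H_t(0)=H_t$.

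For (b), build the path by forward induction over revocation times. Set $\eta_0=0$ and $B_0=A_t$. While the active set is a fixed $B_m$, the candidate path is affine: $\hat p(\eta)=\hat p(\eta_m)+(\eta-\eta_m)u_m$ with $u_m=(I+\sum_{i\in B_m}\gamma_iv_iv_i^{\top})^{-1}v_t$. For each $i\in B_m$, the function $\eta\mapsto\ip{\hat p(\eta)-p_{i+1}}{v_i}$ is affine. It starts strictly inside $(-4\alpha s_i^{2},4\alpha s_i^{2})$: at $m=0$ by (a), and for $m>0$ because $i$ was not revoked at $\eta_m$. So its first hitting time of $\pm4\alpha s_i^{2}$ is either $+\infty$ or strictly greater than $\eta_m$. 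Let $\eta_{m+1}$ be the minimum of these hitting times and $\alpha$. Set $B_{m+1}=B_m$ minus the stretches that hit the boundary exactly at $\eta_{m+1}$; several may leave at once. Each step with $\eta_{m+1}<\alpha$ removes at least one stretch, so the process stops after at most $k_t+1$ pieces, and $k_t\le|A_t|$. Because the slabs are open and $\hat A_t(\eta)$ quantifies over all $\tilde\eta\le\eta$, the constructed active set equals $\hat A_t(\eta)$ as defined in \eqref{eq:move}, and a removed stretch never returns.

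For uniqueness in (b), take any continuous solution $\hat p$. On each interval on which its active set is constant, its right derivative is a constant vector. A continuous function with constant right derivative on an interval is affine (the standard Dini-derivative argument). So on $[\eta_0,\eta_1)$ the solution agrees with the construction; by continuity it also agrees at $\eta_1$, hence has the same active set there, and it agrees on every subsequent piece by induction. I expect this to be the only delicate step. The two things to handle are the self-reference between the path and $\hat A_t$, and the simultaneous-exit and boundary conventions. The openness of $L_i$ is what makes a stretch count as revoked exactly at the instant the path touches the boundary.

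For (c), $\hat A_t(\eta)$ is nonincreasing in $\eta$ by its definition, so $\hat H_t(\eta)$ is nonincreasing in the Loewner order. Since matrix inversion is order-reversing on positive definite matrices, $\hat H_t(\eta)^{-1}$ is nondecreasing, and therefore $\sigma(\eta)$ is nondecreasing with $\sigma(\alpha)=s_t$. From $\hat H_t\succeq I$ we get $\hat H_t^{-1}\preceq I$, so $\sigma\le\|v_t\|=1$; and $\sigma>0$ because $\hat H_t^{-1}$ is positive definite and $v_t\neq0$.

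For (d), stretch $i$ is revoked at $\eta_i:=\inf\{\eta:\hat p_t(\eta)\notin L_i\}$. We have $\eta_i>0$ by (a), and $\eta_i$ is one of the breakpoints $\eta_{m+1}$ from the construction. The point $q_i=\hat p_t(\eta_i)$ is a limit of points of the open slab $L_i$ (the path at times just before $\eta_i$), and $q_i\notin L_i$ because the hitting time is attained. So $q_i$ lies on the boundary of $L_i$, that is, $|\ip{q_i-p_{i+1}}{v_i}|=4\alpha s_i^{2}$.
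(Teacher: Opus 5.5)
Your proposal is correct and follows the paper's proof step for step. Part (a) is induction on $t$; part (b) builds the path segment by segment, revoking each stretch the moment it reaches the boundary of its open slab; part (c) uses the fact that inversion reverses the Loewner order; part (d) takes the first exit point of a continuous path from an open slab. Your piece-by-piece uniqueness argument is the paper's ``agree up to some time, agree a little beyond'' continuation, and your observation that $L_t$ contains its center because $s_t>0$ fills in a detail the paper leaves implicit.
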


\begin{proof}
(a) By induction on $t$; at $t=1$ there are no stretches. Every $i\in\hat A_t(\alpha)$ has
$p_{t+1}=\hat p_t(\alpha)\in L_i$ by the definition of $\hat A_t(\alpha)$, and $p_{t+1}$ is the
center of $L_t$, so $p_{t+1}\in L_i$ for all $i\in A_{t+1}$.

(b) While the active set is fixed, the right derivative $\hat H_t(\eta)^{-1}v_t$ is constant, so the
path is a segment. By (a) the path starts with every stretch of $A_t$ active, and it follows a
segment until it reaches the boundary of an active slab or $\eta=\alpha$; the stretches on that
boundary are revoked, the others are still strictly inside their slabs, and a new segment begins.
Every piece but the last revokes a stretch, so there are at most $k_t+1$ pieces. The path is unique: since the slabs are open, any path
satisfying \eqref{eq:move} keeps its active set, hence its velocity, for a while after each time,
so two such paths that agree up to some time agree a little beyond it.

(c) The set $\hat A_t(\eta)$ only loses elements, and each loss subtracts a positive semidefinite
$\gamma_iv_iv_i^{\top}$ from the metric. Inversion reverses the Loewner order, so
$\hat H_t(\eta)^{-1}$ is nondecreasing, and so is $\sigma(\eta)^{2}=v_t^{\top}\hat
H_t(\eta)^{-1}v_t$. Finally $\hat H_t(\eta)\succeq I$ gives $0\prec\hat H_t(\eta)^{-1}\preceq I$,
hence $0<\sigma(\eta)\le\sigma(\alpha)=s_t\le\|v_t\|=1$.

(d) The path starts in the open slab $L_i$ by (a) and is continuous, so the first point at which it
is not in $L_i$, where stretch $i$ is revoked, lies on the boundary of $L_i$.
\end{proof}

\begin{remark}[Cost]\label{rem:cost}
The algorithm stores $M=H^{-1}$ for the metric in use, and, for each active stretch $i$, the
vector $v_i$, the numbers $\gamma_i,s_i$ and the displacement $\phi_i=\ip{p-p_{i+1}}{v_i}$ of the
current point. On a piece of the path it computes $u=Mv_t$ in $O(d^{2})$ operations and
$\beta_i=\ip{v_i}{u}$ for every active stretch in $O(|A_t|d)$; the piece ends at the least of the
remaining time and the exit times $(4\alpha s_i^{2}-\sign(\beta_i)\phi_i)/|\beta_i|$
over $\beta_i\neq0$, after which $p$ and the $\phi_i$ are advanced in $O(d+|A_t|)$. Revoking
stretch $i$ is the Sherman--Morrison downdate
\[
  M\;\leftarrow\;M+\frac{\gamma_i}{1-\gamma_iv_i^{\top}Mv_i}\,Mv_iv_i^{\top}M ,
\]
whose denominator equals $1/(1+\gamma_iv_i^{\top}N^{-1}v_i)>0$ for $N=H-\gamma_iv_iv_i^{\top}\succeq
I$ the metric after the downdate; making stretch $t$ is the update
$M\leftarrow M-\tfrac{1}{9s_t^{2}}Mv_tv_t^{\top}M$, since $\gamma_t/(1+\gamma_ts_t^{2})=1/(9s_t^{2})$.
Each costs $O(d^{2})$. By Lemma~\ref{lem:wellposed}(b), round $t$ therefore costs
\[
  O\bigl((k_t+1)(d^{2}+|A_t|\,d)\bigr)\;=\;O\bigl(td^{2}+t^{2}d\bigr)
\]
arithmetic operations, as $k_t\le|A_t|\le t-1$. Since each stretch is revoked at most once,
$\sum_{t\le T}k_t\le T$, so the first $T$ rounds cost $O(Td^{2}+T^{2}d)$ in total, that is,
$O(d^{2}+Td)$ per round on average, with memory $O(d^{2}+Td)$. No random bits are used. Apart from
the constant $\alpha$, all operations are rational, because $s_t$ enters only through
$s_t^{2}=v_t^{\top}Mv_t$. The counts are in the real-RAM model; Appendix~\ref{sec:bits} gives a
rounded variant with polynomial bit complexity.
\end{remark}

\subsection{One move}\label{sec:move}

Fix $T$, $\wstar$ and the oracle. In round $t$, $\xi(\eta):=\ip{\hat p_t(\eta)-p_t}{v_t}$ is the
distance the query has moved along $v_t$ by time $\eta$ of the path, and
$\xi_t:=\xi(\alpha)=\ip{p_{t+1}-p_t}{v_t}$. For the stretch made in round $t\le T$, we let $q_t$ be
the point at which it is revoked if that happens by the end of round $T$ and $q_t:=p_{T+1}$
otherwise, and we put
\[
  \Delta_t\;:=\;\ip{q_t-p_{t+1}}{v_t} ,
\]
its displacement along $v_t$ from the point where it was made. The stretches that are not revoked by
the end of round $T$ form the set $A_{T+1}$, and we call them \emph{surviving}. By
Lemma~\ref{lem:wellposed}(a) and (d),
\begin{equation}\label{eq:delta}
  |\Delta_t|\;=\;4\alpha s_t^{2}\ \ \text{if stretch $t$ is revoked},\qquad
  |\Delta_t|\;<\;4\alpha s_t^{2}\ \ \text{if it survives} .
\end{equation}

Along the path the query moves along $\hat H_t(\eta)^{-1}v_t$, which decreases the distance to
$\wstar$ at the rate $2\ip{\wstar-\hat p_t(\eta)}{v_t}$, and each revocation removes one term from
the metric. The lemma records the net effect, together with the fact that the query cannot move far
along $v_t$ in one round.

\begin{lemma}[One move]\label{lem:path}
In every round $t$, $0\le\xi_t\le\alpha s_t^{2}$, and
\[
  \|p_{t+1}-\wstar\|_{\hat H_t(\alpha)}^{2}-\|p_t-\wstar\|_{H_t}^{2}
  \;\le\;-2\alpha r_t+\alpha^{2}s_t^{2}
  -\sum_{i\in A_t\setminus\hat A_t(\alpha)}\gamma_i\ip{\wstar-q_i}{v_i}^{2} .
\]
\end{lemma}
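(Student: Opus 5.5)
The plan is to follow the potential $\Phi(\eta):=\|\hat p_t(\eta)-\wstar\|_{\hat H_t(\eta)}^{2}$ along the path of round $t$. By Lemma~\ref{lem:wellposed}(a), $\Phi(0)=\|p_t-\wstar\|_{H_t}^{2}$. By definition, $\Phi(\alpha)=\|p_{t+1}-\wstar\|_{\hat H_t(\alpha)}^{2}$. By Lemma~\ref{lem:wellposed}(b), the path is piecewise linear with finitely many pieces. Within a piece, $\Phi$ is differentiable. It can jump only at the finitely many times where some stretches are revoked. The claimed bound will follow from adding the drift along the pieces to the jumps.

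First I bound the drift on a piece. There the metric $\hat H:=\hat H_t(\eta)$ is fixed and $\hat p_t'=\hat H^{-1}v_t$. Hence
\[
\frac{d}{d\eta}\Phi=2\ip{\hat p_t(\eta)-\wstar}{\hat H\hat H^{-1}v_t}=2\ip{\hat p_t(\eta)-\wstar}{v_t}=-2r_t+2\xi(\eta),
\]
using $\hat p_t(\eta)=p_t+(\hat p_t(\eta)-p_t)$ and $r_t=\ip{\wstar-p_t}{v_t}$. The metric cancels, and this is the point of the variable-metric step. Next, $\xi$ is continuous with $\xi(0)=0$ and right derivative $\ip{\hat H_t(\eta)^{-1}v_t}{v_t}=\sigma(\eta)^{2}$. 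By Lemma~\ref{lem:wellposed}(c), $0<\sigma(\eta)^{2}\le s_t^{2}$. This gives $0\le\xi(\eta)\le\eta s_t^{2}$, and in particular $0\le\xi_t\le\alpha s_t^{2}$, the first claim. Integrating the drift over $[0,\alpha]$ then contributes at most $-2\alpha r_t+2\int_0^\alpha\eta s_t^{2}\,d\eta=-2\alpha r_t+\alpha^{2}s_t^{2}$.

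Second I compute the jumps. At a revocation time the position $q_i=\hat p_t(\eta)$ is continuous, while the metric loses the term $\gamma_iv_iv_i^{\top}$ for each stretch $i$ revoked there. So $\Phi$ drops by exactly $\sum\gamma_i\ip{q_i-\wstar}{v_i}^{2}$, summed over those $i$. Several stretches revoked at the same instant each contribute their own term, since the metric is linear in the active set. Every $i\in A_t\setminus\hat A_t(\alpha)$ is revoked exactly once during the round, including the ones revoked at $\eta=\alpha$. Summing the jumps therefore gives exactly the subtracted sum in the statement. Adding it to the integrated drift yields the inequality.

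The main obstacle is the bookkeeping at the breakpoints rather than any estimate. I must make sure $\Phi$ is continuous from the right on each piece, with jumps only from metric changes. I must also handle revocations at $\eta=\alpha$ correctly, since $\Phi(\alpha)$ is taken with $\hat H_t(\alpha)$, after those stretches are removed. To settle this I will write $\Phi(\alpha)-\Phi(0)$ as a telescoping sum over the pieces $[\eta_{j},\eta_{j+1})$ of Lemma~\ref{lem:wellposed}(b). On each piece I use the fundamental theorem of calculus for the affine position and the fixed metric. At each endpoint I add the jump.
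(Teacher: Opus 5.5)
Your proposal is correct and takes essentially the same approach as the paper. On each piece, with the metric fixed, the squared distance has rate $-2(r_t-\xi(\eta))$; Lemma~\ref{lem:wellposed}(c) gives $0\le\xi(\eta)\le\eta s_t^{2}$, which you integrate, and each revocation at $q_i$ subtracts exactly $\gamma_i\ip{\wstar-q_i}{v_i}^{2}$. Your telescoping over the pieces, which explicitly counts the revocations at $\eta=\alpha$, is just a more careful write-up of the paper's bookkeeping.
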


\begin{proof}
On a piece of the path the metric $H=\hat H_t(\eta)$ is fixed and $\hat p_t'=H^{-1}v_t$, so
\[
  \frac{d}{d\eta}\,\|\hat p_t(\eta)-\wstar\|_{H}^{2}\;=\;2\ip{\hat p_t(\eta)-\wstar}{H\hat p_t'(\eta)}
  \;=\;-2\ip{\wstar-\hat p_t(\eta)}{v_t}\;=\;-2\bigl(r_t-\xi(\eta)\bigr) ,
\]
using $\ip{\wstar-\hat p_t(\eta)}{v_t}=\ip{\wstar-p_t}{v_t}-\ip{\hat p_t(\eta)-p_t}{v_t}$. Here
$\xi(0)=0$ and $\xi'(\eta)=\ip{\hat p_t'(\eta)}{v_t}=v_t^{\top}\hat H_t(\eta)^{-1}v_t=\sigma(\eta)^{2}
\in(0,s_t^{2}]$ by Lemma~\ref{lem:wellposed}(c), so $0\le\xi(\eta)\le\eta s_t^{2}$; at $\eta=\alpha$
this is the first claim. The rate of change is therefore at most $-2r_t+2\eta s_t^{2}$, and
integrating over $[0,\alpha]$ gives $-2\alpha r_t+\alpha^{2}s_t^{2}$. At the end of a piece, revoking
stretch $i$ at $q_i=\hat p_t(\eta)$ subtracts $\gamma_i\ip{\wstar-q_i}{v_i}^{2}$ from the squared
distance, which gives the last term.
\end{proof}

If no stretch is revoked, $\sigma(\eta)=s_t$ throughout and the lemma holds with equality: it is the
familiar expansion
$\|p_t+\alpha H_t^{-1}v_t-\wstar\|_{H_t}^{2}=\|p_t-\wstar\|_{H_t}^{2}-2\alpha r_t+\alpha^{2}s_t^{2}$.

\paragraph{The ledger.} Making stretch $t$ at $p_{t+1}$ adds $\gamma_t\ip{\wstar-p_{t+1}}{v_t}^{2}$
to the squared distance: $\|p_{t+1}-\wstar\|_{H_{t+1}}^{2}=\|p_{t+1}-\wstar\|_{\hat H_t(\alpha)}^{2}
+\gamma_t\ip{\wstar-p_{t+1}}{v_t}^{2}$. Add this to Lemma~\ref{lem:path} and sum over $t\le T$. Each
stretch revoked by the end of round $T$ is revoked exactly once, so its refund
$\gamma_t\ip{\wstar-q_t}{v_t}^{2}$ appears exactly once. The surviving stretches have $q_t=p_{T+1}$,
so $\|p_{T+1}-\wstar\|_{H_{T+1}}^{2}=\|p_{T+1}-\wstar\|^{2}+\sum_{t\in
A_{T+1}}\gamma_t\ip{\wstar-q_t}{v_t}^{2}$, and $p_1=0$, $H_1=I$ give
$\|p_1-\wstar\|_{H_1}^{2}=\|\wstar\|^{2}$. The penalty and the refund of stretch $t$ combine: since
$\ip{\wstar-p_{t+1}}{v_t}=r_t-\xi_t$ and $\ip{\wstar-q_t}{v_t}=r_t-\xi_t-\Delta_t$,
\[
  \ip{\wstar-p_{t+1}}{v_t}^{2}-\ip{\wstar-q_t}{v_t}^{2}
  \;=\;\Delta_t\bigl(2(r_t-\xi_t)-\Delta_t\bigr)\;=\;2\Delta_tr_t-\Delta_t(\Delta_t+2\xi_t) ,
\]
and the $r_t^{2}$ cancels. Collecting,
\begin{equation}\label{eq:ledger}
  \|p_{T+1}-\wstar\|^{2}-\|\wstar\|^{2}\;\le\;\sum_{t=1}^{T}\Lambda_t,\qquad
  \Lambda_t\;:=\;-2\alpha r_t+\alpha^{2}s_t^{2}
  +\gamma_t\bigl(2\Delta_tr_t-\Delta_t(\Delta_t+2\xi_t)\bigr) .
\end{equation}
We read $\Lambda_t$ as the \emph{lifetime contribution} of stretch $t$: the move that preceded it,
and the penalty $\gamma_t\ip{\wstar-p_{t+1}}{v_t}^{2}$ charged when the stretch was made, net of the
refund $\gamma_t\ip{\wstar-q_t}{v_t}^{2}$ collected when it is revoked --- or, for a surviving
stretch, still owed at the end.

\subsection{The lifetime of a stretch}\label{sec:lifetime}

In \cite{Cai26} the penalty for stretching the metric is never refunded, and it is quadratic in the
loss: $\tfrac{\tau}{s_t}r_t^{2}$ there, $\gamma_t\ip{\wstar-p_{t+1}}{v_t}^{2}=\gamma_t(r_t-\xi_t)^{2}$
here. The point of revocation is that the refund cancels the quadratic part exactly, as the ledger
shows, leaving in $\Lambda_t$ a term linear in $r_t$ whose coefficient the slab controls.

\begin{lemma}[Lifetime of a stretch]\label{lem:lifetime}
For every $t\le T$,
\[
  \Lambda_t\;\le\;-\alpha r_t\ \ \text{if stretch $t$ is revoked},\qquad
  \Lambda_t\;\le\;-\alpha r_t+\tfrac98\,\alpha^{2}s_t^{2}\ \ \text{if it survives} .
\]
\end{lemma}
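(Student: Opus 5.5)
The plan is to bound the three parts of $\Lambda_t$ in \eqref{eq:ledger} separately, using only three facts: $r_t\ge0$; the bounds $0\le\xi_t\le\alpha s_t^{2}$ from Lemma~\ref{lem:path}; and the size of $\Delta_t$ from \eqref{eq:delta}. The parameter choice $\gamma_t=1/(8s_t^{2})$ makes $\gamma_t\cdot 4\alpha s_t^{2}=\alpha/2$, and this is what drives each estimate.

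\emph{The term linear in $r_t$.} Since $r_t\ge0$ and $|\Delta_t|\le4\alpha s_t^{2}$ in both cases of \eqref{eq:delta}, we get
\[
2\gamma_t\Delta_tr_t\le2\cdot\frac{1}{8s_t^{2}}\cdot4\alpha s_t^{2}\,r_t=\alpha r_t .
\]
Together with the $-2\alpha r_t$ this gives the common leading term $-\alpha r_t$. It remains to show
\[
Q_t:=\alpha^{2}s_t^{2}-\gamma_t\Delta_t(\Delta_t+2\xi_t)
\]
is at most $0$ for a revoked stretch and at most $\tfrac98\alpha^{2}s_t^{2}$ for a surviving one.

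\emph{Revoked stretches.} Here $|\Delta_t|=4\alpha s_t^{2}$ exactly, and we split on its sign.
\begin{itemize}
\item If $\Delta_t=4\alpha s_t^{2}$, then $\xi_t\ge0$ gives $\Delta_t(\Delta_t+2\xi_t)\ge16\alpha^{2}s_t^{4}$. So $\gamma_t\Delta_t(\Delta_t+2\xi_t)\ge2\alpha^{2}s_t^{2}$.
\item If $\Delta_t=-4\alpha s_t^{2}$, then $\Delta_t(\Delta_t+2\xi_t)=4\alpha s_t^{2}(4\alpha s_t^{2}-2\xi_t)$. This is at least $4\alpha s_t^{2}\cdot2\alpha s_t^{2}=8\alpha^{2}s_t^{4}$, because $\xi_t\le\alpha s_t^{2}$. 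So $\gamma_t\Delta_t(\Delta_t+2\xi_t)\ge\alpha^{2}s_t^{2}$.
\end{itemize}
In both cases $Q_t\le0$. This sign case is the only place where the upper bound $\xi_t\le\alpha s_t^{2}$ of Lemma~\ref{lem:path} is used. It is also where the factor $4$ in the slab half-width is needed: the quantity $4\alpha s_t^{2}-2\xi_t$ must stay large enough to absorb the $\alpha^{2}s_t^{2}$ of the move.

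\emph{Surviving stretches.} Now $\Delta_t$ can be anything of size below $4\alpha s_t^{2}$, possibly small, so no help comes from the slab. Complete the square instead:
\[
-\Delta_t(\Delta_t+2\xi_t)=\xi_t^{2}-(\Delta_t+\xi_t)^{2}\le\xi_t^{2}\le\alpha^{2}s_t^{4}.
\]
Hence $-\gamma_t\Delta_t(\Delta_t+2\xi_t)\le\tfrac18\alpha^{2}s_t^{2}$ and $Q_t\le\tfrac98\alpha^{2}s_t^{2}$, as claimed.

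I expect no real obstacle: the lemma is a short case analysis once the ledger identity is in hand. The step needing care is the negative-sign revoked case, where the estimate is tight.
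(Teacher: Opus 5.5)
Your proof is correct and follows the paper's argument. Both split off the part linear in $r_t$, bounded via $\gamma_t\cdot4\alpha s_t^{2}=\alpha/2$, and treat the surviving case by the same completion of the square; for the revoked case the paper uses the one-line bound $\Delta_t(\Delta_t+2\xi_t)\ge|\Delta_t|(|\Delta_t|-2\xi_t)$ instead of your split on the sign of $\Delta_t$. One small slip in your commentary: the surviving case also uses $\xi_t\le\alpha s_t^{2}$ (to get $\xi_t^{2}\le\alpha^{2}s_t^{4}$), so the negative revoked case is not the only place that bound enters.
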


\begin{proof}
Group the terms of $\Lambda_t$ by their dependence on the loss:
\[
  \Lambda_t\;=\;\underbrace{-2\alpha r_t+2\gamma_t\Delta_tr_t}_{\text{linear in }r_t}
  \;+\;\underbrace{\alpha^{2}s_t^{2}-\gamma_t\Delta_t(\Delta_t+2\xi_t)}_{\text{free of }r_t} .
\]
\emph{The linear part.} By \eqref{eq:delta}, $|\Delta_t|\le4\alpha s_t^{2}$, and
$\gamma_t\cdot4\alpha s_t^{2}=\alpha/2$; as $r_t\ge0$, $2\gamma_t\Delta_tr_t\le\alpha r_t$, so the
linear part is at most $-\alpha r_t$.

\emph{The rest, for a revoked stretch.} Here $|\Delta_t|=4\alpha s_t^{2}$, and
$0\le\xi_t\le\alpha s_t^{2}$ by Lemma~\ref{lem:path}, so
\[
  \gamma_t\Delta_t(\Delta_t+2\xi_t)\;\ge\;\gamma_t|\Delta_t|\bigl(|\Delta_t|-2\xi_t\bigr)
  \;\ge\;\frac{1}{8s_t^{2}}\cdot4\alpha s_t^{2}\bigl(4\alpha s_t^{2}-2\alpha s_t^{2}\bigr)
  \;=\;\alpha^{2}s_t^{2} .
\]
\emph{The rest, for a surviving stretch.} Here we only use
$\Delta_t(\Delta_t+2\xi_t)=(\Delta_t+\xi_t)^{2}-\xi_t^{2}\ge-\xi_t^{2}$, so the rest is at most
$\alpha^{2}s_t^{2}+\gamma_t\xi_t^{2}\le\alpha^{2}s_t^{2}+\tfrac{1}{8s_t^{2}}\alpha^{2}s_t^{4}
=\tfrac98\alpha^{2}s_t^{2}$.
\end{proof}

The two cases say: \emph{a revoked stretch pays for its own step}, since leaving the slab takes the
query a distance $4\alpha s_t^{2}$ along $v_t$ and the refund on the way out covers the cost
$\alpha^{2}s_t^{2}$ of the move that preceded the stretch; a surviving stretch leaves its
$\tfrac98\alpha^{2}s_t^{2}$ to be paid by the potential of the next lemma. The constants in
\eqref{eq:stretch} are the ones that make the first case balance exactly: with weight
$\gamma_t=\tau/s_t^{2}$ and a half-width $h_t$ with $\gamma_th_t=\alpha/2$, the refund margin is
$\tfrac{\alpha}{2}(h_t-2\alpha s_t^{2})=\alpha^{2}s_t^{2}(\tfrac{1}{4\tau}-1)$, which covers
$\alpha^{2}s_t^{2}$ exactly when $\tau\le\tfrac18$.

\subsection{The surviving stretches}\label{sec:pot}

It remains to bound $\sum_{t\in A_{T+1}}s_t^{2}$. The metric in use is not monotone, since stretches
come and go, so we look instead at the metric built from the surviving stretches alone,
\[
  \bar H_t\;:=\;I+\sum_{i\in A_{T+1},\ i<t}\gamma_iv_iv_i^{\top}\qquad(t=1,\dots,T+1) ,
\]
which is monotone, and use the potential $\tr(\bar H_t^{-1})$: it starts at $d$ and never goes
negative.

\begin{lemma}[Surviving stretches]\label{lem:potential}
$\displaystyle\sum_{t\in A_{T+1}}s_t^{2}\;\le\;9\bigl(d-\tr(\bar H_{T+1}^{-1})\bigr)\;\le\;9d$.
\end{lemma}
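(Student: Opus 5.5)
The plan is a telescoping argument on the potential $\tr(\bar H_t^{-1})$ via Sherman--Morrison. The main step is to compare $s_t$ with the dual length of $v_t$ in the monotone metric $\bar H_t$.

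\emph{Comparison of metrics.} Fix a surviving stretch $t\in A_{T+1}$. I claim $\hat H_t(\alpha)\succeq\bar H_t$. Every surviving stretch $i<t$ is never revoked through round $T$, so it belongs to $A_{t}$ and to $\hat A_t(\alpha)$. Hence $\hat H_t(\alpha)=I+\sum_{i\in\hat A_t(\alpha)}\gamma_iv_iv_i^{\top}$ contains every term of $\bar H_t$, plus possibly more positive semidefinite terms. Inverting reverses the order, which gives
\[
  s_t^{2}=v_t^{\top}\hat H_t(\alpha)^{-1}v_t\;\le\;\bar s_t^{2}:=v_t^{\top}\bar H_t^{-1}v_t .
\]
The subtle point is that the new stretch uses weight $\gamma_t=1/(8s_t^{2})$, normalized to $s_t$ and not to $\bar s_t$. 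This is what makes the potential drop computation below the main obstacle, since the mismatch has to be handled by monotonicity in the right direction.

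\emph{Potential drop.} For $t\in A_{T+1}$ we have $\bar H_{t+1}=\bar H_t+\gamma_tv_tv_t^{\top}$. Sherman--Morrison gives
\[
  \tr(\bar H_t^{-1})-\tr(\bar H_{t+1}^{-1})=\frac{\gamma_t\|\bar H_t^{-1}v_t\|^{2}}{1+\gamma_t\bar s_t^{2}} .
\]
Since $\bar H_t\succeq I$, we have $\bar H_t^{-2}\succeq\bar H_t^{-1}\cdot$? No: for $0\prec\bar H_t^{-1}\preceq I$ the eigenvalues $\lambda\le1$ satisfy $\lambda^{2}\le\lambda$, which is the wrong direction. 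So I use instead that the potential is $\tr(\bar H^{-1})$ and compare with $\|\cdot\|^{2}$ more carefully. The first thing I would try is to replace the trace potential by noting $\|\bar H_t^{-1}v_t\|^{2}\ge(\bar s_t^{2})^{2}$, by Cauchy--Schwarz with $\|v_t\|=1$: indeed $\bar s_t^{2}=\ip{v_t}{\bar H_t^{-1}v_t}\le\|\bar H_t^{-1}v_t\|$. The drop is then at least $\gamma_t\bar s_t^{4}/(1+\gamma_t\bar s_t^{2})$. The map $x\mapsto\gamma x^{2}/(1+\gamma x)$ is increasing, so with $\bar s_t^{2}\ge s_t^{2}$ the drop is at least
\[
  \frac{\gamma_ts_t^{4}}{1+\gamma_ts_t^{2}}=\frac{s_t^{2}/8}{9/8}=\frac{s_t^{2}}{9}.
\]

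\emph{Telescoping.} For $t\notin A_{T+1}$ the matrices satisfy $\bar H_{t+1}=\bar H_t$, and the potential is unchanged. Summing the drops over $t\le T$ therefore gives
\[
  \sum_{t\in A_{T+1}}s_t^{2}\le9\bigl(\tr(\bar H_1^{-1})-\tr(\bar H_{T+1}^{-1})\bigr)=9\bigl(d-\tr(\bar H_{T+1}^{-1})\bigr)\le9d,
\]
using $\bar H_1=I$ and the positivity of the trace.
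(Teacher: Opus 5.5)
Your proof is correct and is essentially the paper's argument: you compare $\bar H_t\preceq\hat H_t(\alpha)$ to get $\bar s_t\ge s_t$, bound the Sherman--Morrison drop of $\tr(\bar H_t^{-1})$ from below via Cauchy--Schwarz, and telescope from $\tr(\bar H_1^{-1})=d$. The one difference is inessential: you finish with monotonicity of $x\mapsto\gamma_t x^{2}/(1+\gamma_t x)$ in $x=\bar s_t^{2}$, whereas the paper uses monotonicity in $\gamma_t\ge 1/(8\bar s_t^{2})$ to get $\bar s_t^{2}/9\ge s_t^{2}/9$; also, the abandoned remark comparing $\bar H_t^{-2}$ with $\bar H_t^{-1}$ should simply be deleted.
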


\begin{proof}
If $t\notin A_{T+1}$ then $\bar H_{t+1}=\bar H_t$. Fix $t\in A_{T+1}$ and put
$\bar s_t:=\|v_t\|_{\bar H_t^{-1}}$. A surviving stretch $i<t$ was made before round $t$ and is never
revoked, so it is active throughout round $t$; hence $i\in\hat A_t(\alpha)$, so
$\bar H_t\preceq\hat H_t(\alpha)$, and inverting, $\bar s_t\ge s_t$. By Sherman--Morrison,
$\bar H_{t+1}^{-1}=\bar H_t^{-1}-\tfrac{\gamma_t}{1+\gamma_t\bar s_t^{2}}\bar H_t^{-1}v_tv_t^{\top}\bar
H_t^{-1}$, so
\[
  \tr(\bar H_t^{-1})-\tr(\bar H_{t+1}^{-1})
  \;=\;\frac{\gamma_t\,\|\bar H_t^{-1}v_t\|^{2}}{1+\gamma_t\bar s_t^{2}}
  \;\ge\;\frac{\gamma_t\,\bar s_t^{4}}{1+\gamma_t\bar s_t^{2}}
  \;\ge\;\frac{\bar s_t^{2}/8}{1+1/8}\;=\;\frac{\bar s_t^{2}}{9}\;\ge\;\frac{s_t^{2}}{9} .
\]
The first inequality is Cauchy--Schwarz, $\|\bar H_t^{-1}v_t\|\ge\ip{v_t}{\bar H_t^{-1}v_t}=\bar
s_t^{2}$ as $\|v_t\|=1$; the second holds because the middle expression is increasing in $\gamma_t$
and $\gamma_t=1/(8s_t^{2})\ge1/(8\bar s_t^{2})$. Telescoping from $\tr(\bar H_1^{-1})=d$ gives the
first bound, and $\tr(\bar H_{T+1}^{-1})>0$ the second.
\end{proof}

\begin{remark}[The potential]\label{rem:potential}
Our potential is $\tr(\bar H^{-1})$, whereas the analysis of \cite{Cai26} uses the trace power
$\tr(H^{-1/2})$. The trace of the inverse is closer to optimal experimental design: if each stretch
$\gamma_iv_iv_i^{\top}$ is read as an observation along $v_i$, then $\bar H$ is an information
matrix and $\tr(\bar H^{-1})$ is its A-optimality criterion \cite{Kiefer74,Pukelsheim93}.
\end{remark}

\subsection{Proof of Theorem~\ref{thm:rvm}}\label{sec:proof}

By \eqref{eq:ledger} and Lemma~\ref{lem:lifetime},
\[
  \|p_{T+1}-\wstar\|^{2}-\|\wstar\|^{2}\;\le\;-\alpha\sum_{t\le T}r_t
  +\frac98\,\alpha^{2}\!\!\sum_{t\in A_{T+1}}\!\!s_t^{2} ,
\]
and Lemma~\ref{lem:potential} bounds the last sum by $9d$. With $\|\wstar\|\le R$ and
$\alpha=R/(3\sqrt d)$, so that $d\alpha^{2}=R^{2}/9$,
\[
  \alpha\,\Reg_T+\|p_{T+1}-\wstar\|^{2}\;\le\;R^{2}+\frac{81}{8}\,d\alpha^{2}
  \;=\;R^{2}+\frac98R^{2}\;=\;\frac{17}{8}R^{2} .
\]
Both terms on the left are nonnegative. Dropping the second gives
$\Reg_T\le\tfrac{17}{8}R^{2}/\alpha=\tfrac{51}{8}R\sqrt d$, and dropping the first gives
$\|p_{T+1}-\wstar\|^{2}\le\tfrac{17}{8}R^{2}$. \qed

\section{Further Properties}\label{sec:further}

We explain why revocation is needed and why it happens exactly on the slab boundary, and give a
variant with $O(d\log t)$ active stretches.

% Switched off at the author's request.
\iffalse
\subsection{Bounded iterates}\label{sec:bounded}

\begin{remark}[No projection]\label{rem:bounded}
The update has no projection step, and needs none: Theorem~\ref{thm:rvm} bounds
$\|p_{T+1}-\wstar\|\le\sqrt{17/8}\,R<1.46\,R$ for every $T$, so every query satisfies
$\|p_t\|<2.46\,R$. The algorithm keeps no domain; the iterates stay bounded on their own, as in
\cite{Cai26,SakaueSOP}. The queries need not lie in $K_1$, which the game allows.
\end{remark}
\fi

\subsection{Why revocation, and why on the boundary}\label{sec:why}

\begin{remark}[Stalling without revocation]\label{rem:stall}
The weight $\gamma_t=1/(8s_t^{2})$ is far heavier than the weight $\tau/s_t$, $\tau=\tfrac{1}{6d}$,
of \cite{Cai26}, and without revocation it would stop the query. On $K_1=R\Ball$ take $\wstar=Re_1$
and let the oracle answer $v_t=e_1$ in every round, which is legal as long as $\ip{p_t}{e_1}\le R$.
If no stretch were ever revoked, the metric would be $\mathrm{diag}(\mu_t,1,\dots,1)$ with
$s_t^{2}=1/\mu_t$ and $\mu_{t+1}=\mu_t+\gamma_t=\tfrac98\mu_t$, so round $t$ would advance the query
along $e_1$ by $\alpha/\mu_t=\alpha(\tfrac89)^{t-1}$, and all rounds together by at most
$9\alpha=3R/\sqrt d$. For $d\ge10$ the query would never reach $\wstar$, and every round would lose
$r_t\ge R(1-3/\sqrt d)$. With revocation, a stretch stays active only until the query has
advanced $4\alpha s_i^{2}$ past $p_{i+1}$, the point where it was made, so a direction that keeps
being confirmed does not keep accumulating weight, and Theorem~\ref{thm:rvm} bounds the total loss
by $\tfrac{51}{8}R\sqrt d$.
\end{remark}

\begin{remark}[Revocation exactly on the boundary]\label{rem:why}
Lemma~\ref{lem:lifetime} uses \eqref{eq:delta} in both directions, and each fails under a simpler
rule. Without revocation every stretch survives and nothing bounds $\Delta_t$: the refund
$\gamma_t\ip{\wstar-q_t}{v_t}^{2}$ is collected only at the end, where it can be zero, and then
$\Lambda_t$ contains the penalty $\gamma_t(r_t-\xi_t)^{2}\approx r_t^{2}/(8s_t^{2})$, which no
multiple of $-\alpha r_t$ absorbs when $r_t$ is large compared with $\alpha s_t^{2}$;
Remark~\ref{rem:stall} shows that this is not an artifact of the proof. Revoking a stretch before
the query leaves its slab gives $|\Delta_t|<4\alpha s_t^{2}$, and the refund need no longer cover
the cost $\alpha^{2}s_t^{2}$ of the move. Revoking it after the query leaves, as a single step with
the metric frozen followed by a check would do (the dotted segment in Figure~\ref{fig:round}),
gives $|\Delta_t|>4\alpha s_t^{2}$, and the linear part
$2\gamma_t\Delta_tr_t$ is no longer bounded by $\alpha r_t$. Nor does the general bound keep the
overshoot of a single step within a constant multiple of the half-width: the step moves the query
along $v_i$ by $\alpha|v_i^{\top}H_t^{-1}v_t|$, which Cauchy--Schwarz bounds only by
$\alpha\|v_i\|_{H_t^{-1}}\|v_t\|_{H_t^{-1}}\le\sqrt8\,\alpha s_is_t$, using
$\|v_i\|^{2}_{H_t^{-1}}\le1/(1+\gamma_i)<8s_i^{2}$ for $i\in A_t$ and $\|v_t\|_{H_t^{-1}}\le s_t$
(Lemma~\ref{lem:wellposed}(c)), and this exceeds the half-width $4\alpha s_i^{2}$ of $L_i$ by the
factor $s_t/(\sqrt2\,s_i)$, which is unbounded. This is why the move is split at the slab
boundaries.
\end{remark}

\subsection{Fewer active stretches}\label{sec:small}

In RVM$(\alpha)$ every round makes a stretch, and up to $t-1$ of them can be active at round $t$,
which is where the factor $t$ in the cost of Remark~\ref{rem:cost} comes from. A stretch with a tiny
$s_t$ carries a tiny slab and a tiny step, and skipping it costs almost nothing.

\begin{proposition}[Skipping small stretches]\label{prop:small}
Modify RVM$(\alpha)$ so that stretch $t$ is made only if $s_t^{2}>(t+1)^{-4}$, and otherwise
$A_{t+1}=\hat A_t(\alpha)$ and $H_{t+1}=\hat H_t(\alpha)$. With $\alpha=R/(3\sqrt d)$, for every
$T$, $\wstar\in K_1$ and strong separation oracle,
\[
  \Reg_T\;\le\;\Bigl(\frac{51}{8}+\frac{0.03}{d}\Bigr)R\sqrt d\;<\;6.41\,R\sqrt d ,
  \qquad
  |A_t|\;\le\;\frac{d\,\ln\bigl(1+t^{5}/(8d)\bigr)}{\ln(9/8)}\;=\;O(d\log t)\quad\text{for all }t .
\]
Round $t$ then costs $O(d^{3}\log^{2}t)$ arithmetic operations, and the first $T$ rounds cost
$O(Td^{2}\log T)$ in total.
\end{proposition}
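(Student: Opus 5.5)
We rerun the ledger argument of Section~\ref{sec:analysis} with one new kind of round, the skipped one, and then bound $|A_t|$ by a determinant potential. In a skipped round $t$ no stretch is made, so the penalty $\gamma_t\ip{\wstar-p_{t+1}}{v_t}^{2}$ and its later refund are both absent. The contribution of the round is just the move of Lemma~\ref{lem:path}, which still holds verbatim, since the move does not care which stretches were made. That contribution is $\Lambda_t:=-2\alpha r_t+\alpha^{2}s_t^{2}\le-\alpha r_t+\alpha^{2}(t+1)^{-4}$. For made stretches, Lemmas~\ref{lem:lifetime} and~\ref{lem:potential} apply unchanged: the potential argument only uses that surviving stretches are active throughout later rounds, which remains true. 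Summing,
\[
  \alpha\Reg_T+\|p_{T+1}-\wstar\|^{2}\le R^{2}+\tfrac{81}{8}d\alpha^{2}+\alpha^{2}\textstyle\sum_{t\ge1}(t+1)^{-4}.
\]
Since $\sum_{t\ge1}(t+1)^{-4}=\zeta(4)-1<0.0824$, the extra regret is at most $0.0824\,\alpha=0.0275\,R/\sqrt d\le(0.03/d)R\sqrt d$, which is the stated bound.

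For the size of $A_t$, the plan is a log-determinant potential on the current metric, compared not to its history (the metric is not monotone) but to a fixed auxiliary matrix. Every $i\in A_t$ satisfies $s_i^{2}>(i+1)^{-4}\ge t^{-4}$, so $\gamma_i<t^{4}/8$. I would add the active stretches in increasing order of $i$ to form $G_k:=I+\sum_{j\le k}\gamma_{i_j}v_{i_j}v_{i_j}^{\top}$ over $A_t=\{i_1<\dots<i_m\}$.

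The key claim is that each addition multiplies the determinant by at least $9/8$. By the determinant lemma the factor is $1+\gamma_{i_k}\|v_{i_k}\|^{2}_{G_{k-1}^{-1}}$. Every $i_j$ with $j<k$ is active at round $t$, hence was active throughout round $i_k$, since active sets only lose elements between creation and revocation. Therefore $G_{k-1}\preceq\hat H_{i_k}(\alpha)$, so $\|v_{i_k}\|^{2}_{G_{k-1}^{-1}}\ge s_{i_k}^{2}$. Because $\gamma_{i_k}s_{i_k}^{2}=1/8$, the factor is at least $9/8$.

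On the other side, $\det G_m\le(\tr G_m/d)^{d}\le\bigl(1+\sum_{i\in A_t}\gamma_i/d\bigr)^{d}$. Bounding $|A_t|\le t$ and $\gamma_i<t^{4}/8$ gives $\sum_{i\in A_t}\gamma_i\le t^{5}/8$, hence $(9/8)^{|A_t|}\le(1+t^{5}/(8d))^{d}$, which is the stated bound.

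The cost then follows from Remark~\ref{rem:cost} with $|A_t|=O(d\log t)$. Per round the cost is $O((k_t+1)(d^{2}+d^{2}\log t))$ with $k_t\le|A_t|$, giving $O(d^{3}\log^{2}t)$ in the worst round. Over $T$ rounds $\sum_t k_t\le T$, giving $O(Td^{2}\log T)$ in total.

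The step I expect to be the main obstacle is the nesting claim in the determinant argument: that $A_t$ restricted to indices below $i_k$ lies inside $\hat A_{i_k}(\alpha)$. I would verify it carefully from the definition of $\hat A$ and Lemma~\ref{lem:wellposed}(a), including at exact revocation times.
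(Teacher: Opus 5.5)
Your proposal is correct and follows the paper's proof essentially step for step. A skipped round contributes only its move, bounded by $-\alpha r_t+\alpha^{2}(t+1)^{-4}$. The bound on $|A_t|$ compares the determinant-lemma lower bound $(9/8)^{|A_t|}$, obtained from the same nesting argument $G_{k-1}\preceq\hat H_{i_k}(\alpha)$, with the AM--GM upper bound. The only cosmetic difference is that you bound the trace by $|A_t|\cdot t^{4}/8$, whereas the paper sums $(i+1)^{4}/8$ over all $i<t$; both give the same $t^{5}/8$.
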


\begin{proof}
\emph{Regret.} Lemmas~\ref{lem:wellposed} and~\ref{lem:path} do not refer to the rule for making
stretches, and neither do Lemmas~\ref{lem:lifetime} and~\ref{lem:potential}, which concern stretches
that are made. In the ledger \eqref{eq:ledger}, a round that makes no stretch contributes
$\Lambda_t=-2\alpha r_t+\alpha^{2}s_t^{2}\le-\alpha r_t+\alpha^{2}(t+1)^{-4}$, as
$s_t^{2}\le(t+1)^{-4}$. The proof of Theorem~\ref{thm:rvm} then gives
$\alpha\Reg_T\le\tfrac{17}{8}R^{2}+\alpha^{2}\sum_{t\ge1}(t+1)^{-4}$, and
$\sum_{n\ge2}n^{-4}=\tfrac{\pi^{4}}{90}-1<0.083$, so
$\Reg_T\le\tfrac{51}{8}R\sqrt d+0.083\,\alpha\le\tfrac{51}{8}R\sqrt d+0.028\,R/\sqrt d$.

\emph{Active stretches.} Fix $t$ and list $A_t=\{i_1<\dots<i_k\}$. Put $G_1=I$ and
$G_{j+1}=G_j+\gamma_{i_j}v_{i_j}v_{i_j}^{\top}$, so that $G_{k+1}=H_t$. Each $i_l$ with $l<j$ was made
before round $i_j$ and is still active at round $t$, so it was active throughout round $i_j$;
hence $G_j\preceq\hat H_{i_j}(\alpha)$ and $v_{i_j}^{\top}G_j^{-1}v_{i_j}\ge s_{i_j}^{2}$. By the matrix
determinant lemma and $\gamma_is_i^{2}=\tfrac18$,
\[
  \det H_t\;=\;\prod_{j=1}^{k}\bigl(1+\gamma_{i_j}v_{i_j}^{\top}G_j^{-1}v_{i_j}\bigr)
  \;\ge\;\Bigl(\frac98\Bigr)^{k} .
\]
On the other hand every stretch made has $\gamma_i=1/(8s_i^{2})<(i+1)^{4}/8$, so
$\tr H_t\le d+\tfrac18\sum_{i=1}^{t-1}(i+1)^{4}\le d+t^{5}/8$, and the inequality of arithmetic and
geometric means gives $\det H_t\le(\tr H_t/d)^{d}\le(1+t^{5}/(8d))^{d}$. Comparing the two bounds
gives the claim on $|A_t|$.

\emph{Cost.} By Remark~\ref{rem:cost}, a piece costs $O(d^{2}+|A_t|d)=O(d^{2}\log t)$, round $t$ has
at most $|A_t|+1$ pieces, and the first $T$ rounds have at most $2T$ pieces in total.
\end{proof}

\section{Conclusion and AI Disclosure}\label{sec:open}

We gave a deterministic proper learner for online inverse linear optimization with regret
$O(\sqrt d)$, the optimal order, using $\mathrm{poly}(d,T)$ arithmetic operations per round, and a
rounded variant that uses $\mathrm{poly}(d,\log t)$ bit operations per round
(Appendix~\ref{sec:bits}). It is
the variable-metric method of \cite{Cai26} with one change: stretches are revoked when the query
leaves their slab, which lets their weight be a constant multiple of $1/s_t^{2}$. The running time
still depends on $T$: the variant of Proposition~\ref{prop:small} takes $O(d^{2}\log T)$ operations
per round on average, the logarithm being the number of active stretches, whereas \cite{Cai26}
takes $O(d^{2})$ but has regret $O(dR)$. Whether regret $O(R\sqrt d)$ is possible with
$\mathrm{poly}(d)$ operations per round, independent of $T$, remains open.\\

The proof was initially obtained end to end by Colosseum \cite{colosseum}, a fully automated agentic
system built on the Gemini~4 base model. The raw output of Colosseum is available at
\href{https://drive.google.com/drive/folders/1LBHzHo29Yij__42lhnHFdYT8Dnjf9-LR?usp=drive_link}{\texttt{[link]}}, and a Lean~4 verification of all the results in this
paper at \href{https://github.com/renatoppl/oio-lean}{\texttt{[link]}}. The authors verified the proof and edited it for
clarity of presentation; other models were used later to edit the text.

The raw output goes beyond the main body in two directions. First, it bounds the bit complexity;
Appendix~\ref{sec:bits} presents this part, simplified and for uncorrupted feedback. Second, it
allows corrupted feedback. Against an oracle whose responses violate $r_t\ge0$ by a total of $C$,
that is, $\sum_t(r_t)_-\le C$ with $x_\pm:=\max(\pm x,0)$, RVM$(\alpha)$ has regret $\sum_{t\le
T}(r_t)_+=O(R\sqrt d+C)$ for every $T$, with no knowledge of $C$. The only step of our analysis that
uses $r_t\ge0$ is the bound on the part of $\Lambda_t$ linear in $r_t$ in Lemma~\ref{lem:lifetime};
without the sign, the same bound reads $-\alpha(r_t)_++3\alpha(r_t)_-$, and the rest of the proof
goes through unchanged. This brings the corruption-robust guarantee of \cite[Section~3.2]{Cai26},
which needs an upper bound on $C$, to the rate $\sqrt d$ and to unknown $C$. We chose not to
present this extension, and instead to focus on the core geometric idea of revocable stretches.

\appendix

\section{Polynomial Bit Complexity}\label{sec:bits}

The main body counts arithmetic operations on real numbers. Here we count bit operations, and show
that a rounded version of RVM keeps regret $O(R\sqrt d)$ with a number of bit operations per round
polynomial in $d$, $\log t$ and the encoding length of $R$; this answers the question of
\cite{SakaueMMW} in the bit model as well, relative to the interface for the replies described
below. Rounding RVM$(\alpha)$ as it stands would not do: its constants make the revoked case of
Lemma~\ref{lem:lifetime} balance exactly, which leaves no room for rounding errors. The rounded
version takes smaller weights and wider slabs, which make that room; it rounds the replies inward and
the velocities toward zero onto a dyadic grid, so that its queries are computed exactly; it revokes
a stretch anywhere in the outer half of its slab, rather than exactly on the boundary; and it stores
the inverse metric as the determinant and the adjugate of an integer matrix, so that no denominators
accumulate.

\paragraph{The model.} We count bit operations, with integer arithmetic done by the schoolbook
methods, so that multiplying or dividing an integer of $B$ bits by one of $C$ bits costs $O(BC)$.
The radius $R>0$ is a dyadic rational, given in binary by an integer mantissa and an integer exponent of
total length $\ell_R$, and the queries are output exactly, as dyadic vectors in the same form. (If
$R=0$, the query $0$ never loses.) The replies are vectors $v_t$ with $\|v_t\|\le1$, not
necessarily unit vectors, still subject to $r_t=\ip{\wstar-p_t}{v_t}\ge0$, and the learner reads
them through a coordinate interface: asked for precision $b\ge1$ in round $t$, it returns, at a cost
of $O(db)$ bit operations, an integer vector $m\in\Z^{d}$ with
\begin{equation}\label{eq:access}
  \|2^{-b}m-v_t\|_{\infty}\;\le\;2^{-b} .
\end{equation}
When the coordinates of $v_t$ are given explicitly as rationals, the interface rounds them, at an
additional cost polynomial in their encoding length.

\begin{theorem}[Bit complexity]\label{thm:bits}
Let $R>0$ be dyadic and $b_t:=10+\lceil\log_2d\rceil+6\lceil\log_2(t+1)\rceil=O(\log(d(t+1)))$.
For every $T$, every $\wstar$ with $\|\wstar\|\le R$ and every strong separation oracle whose replies
$v_t\in\Ball$ are read through \eqref{eq:access}, the rounded RVM of Definition~\ref{def:ralg} has
\[
  \Reg_T\;\le\;21\,R\sqrt d
  \qquad\text{and}\qquad
  |A_t|\;\le\;65\,d\ln\Bigl(1+\frac{t^{3}}{64d}\Bigr)\;=\;O(d\log(t+1))\quad\text{for all }t ,
\]
for the queries it outputs and the true replies. Its round $t$ uses
$O\bigl((d^{5}+d^{3}\log(t+1))\log(t+1)\,b_t^{2}+d\ell_Rb_t\bigr)$ bit operations and
$O\bigl((d^{3}+d^{2}\log(t+1))\,b_t+d\ell_R\bigr)$ bits of memory, and its first $T$ rounds use
$O\bigl((d^{4}+d^{2}\log(T+1))\,b_T^{2}+d\ell_Rb_T\bigr)$ bit operations per round on average.
\end{theorem}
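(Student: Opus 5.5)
The plan is to rerun the four-step analysis of Section~\ref{sec:analysis}: the one-move lemma, the ledger, the lifetime lemma and the trace potential. This time every identity used there becomes an inequality with an explicit error term. The constants of Definition~\ref{def:ralg} are chosen so that each error is absorbed by slack that RVM$(\alpha)$ did not have. I do not have Definition~\ref{def:ralg} in front of me, so I will take the weight to be $\gamma_t=\tau/s_t^{2}$ with $\tau=\tfrac1{64}$ (this matches $\ln(1+\tfrac1{64})\approx\tfrac1{65}$ in the bound on $|A_t|$). I will take a half-width $h_t$ with $\gamma_th_t\le\alpha/2$, revocation anywhere in $h_t/2\le|\Delta_t|\le h_t$, and skipping of stretches with $s_t^{2}$ below an inverse polynomial in $t$, as in Proposition~\ref{prop:small}.

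\emph{Step 1 (exact arithmetic and bit sizes).} First I would check that, with replies rounded inward to the grid $2^{-b_t}\Z^{d}$ and velocities rounded toward zero onto a dyadic grid, every query is a dyadic vector whose bit length is $O(\ell_R+b_t+\log t)$ per coordinate. I would also check that the metric $H=I+\sum_i\gamma_iv_iv_i^{\top}$ is, after scaling by a common power of two, an integer matrix with entries of $O(b_t+\log t)$ bits, since each $\gamma_i$ is polynomial in $t$ because of skipping. Storing $\det$ and $\adj$ of that matrix then gives $H^{-1}$ with $O(d(b_t+\log t))$-bit numbers. Rank-one updates and downdates of the pair $(\det,\adj)$ are exact integer operations that cost $O(d^{2})$ multiplications of such numbers, which yields the stated counts once Step~4 bounds the number of pieces.

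\emph{Step 2 (perturbed one move).} Let $\tilde v_t$ be the rounded, inward-shrunk reply, for example the rounding scaled by $1-c\sqrt d\,2^{-b_t}$ so that $\|\tilde v_t\|\le1$. I would show $\ip{\wstar-p_t}{\tilde v_t}\ge r_t-\epsilon_t$, with $\epsilon_t=O(\sqrt d\,2^{-b_t}\|\wstar-p_t\|)$. On each piece the velocity $u$ satisfies $u\approx H^{-1}\tilde v_t$ with a coordinatewise one-sided error. The derivative of $\|\hat p-\wstar\|_H^{2}$ is then $-2\ip{\wstar-\hat p}{Hu}$, which differs from the exact rate by $O(\|\wstar-\hat p\|\,\|H\|\,2^{-b_t})$. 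Here $\|H\|\le\mathrm{poly}(t)$ by skipping, and $b_t$ carries $6\log_2(t+1)$ precisely so that this error is $O(t^{-2}\cdot R/\sqrt d)$ summably. Boundedness of $\|\wstar-\hat p\|$ comes by induction from the final bound on $\|p_{T+1}-\wstar\|^{2}$, as in Theorem~\ref{thm:rvm}. Rounding toward zero keeps $\xi'\ge0$ and $\xi_t\le\alpha s_t^{2}$ up to a factor $1+o(1)$, which I need for the lifetime step.

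\emph{Step 3 (lifetime and potential).} The linear part is $2\gamma_t\Delta_tr_t\le\alpha r_t$ as before. For a revoked stretch, $|\Delta_t|\ge h_t/2$ gives a refund margin $\gamma_t|\Delta_t|(|\Delta_t|-2\xi_t)$. With $h_t=32\alpha s_t^{2}$ this is at least $\tfrac{1}{64 s_t^2}\cdot16\alpha s_t^{2}\cdot14\alpha s_t^{2}=3.5\,\alpha^{2}s_t^{2}$. That comfortably exceeds the move cost $\alpha^{2}s_t^{2}$, and the excess absorbs the rounding of $s_t$, of $\Delta_t$ and of the revocation point. Surviving stretches are handled by Lemma~\ref{lem:potential} with $\tau=\tfrac1{64}$, which gives $\sum s_t^{2}\le65d$ and hence a constant around $21$ after setting $\alpha$. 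The bound on $|A_t|$ repeats the determinant argument of Proposition~\ref{prop:small} with factor $1+\tfrac1{64}$ and $\gamma_i\le(i+1)^{2}/64$, which gives $t^{3}/(64d)$.

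\emph{Step 4 and the main obstacle.} The hard step is Step~2 together with the ledger. Rounding the velocity destroys the exact identity $H\hat p'=v_t$, so I must show two things. First, the accumulated drift in $\|p-\wstar\|_H^{2}$ sums to $O(R^{2})$ uniformly in $T$, with weights $\|H\|$ that grow polynomially. Second, the revoked-slab accounting still has $|\Delta_t|\in[h_t/2,h_t]$ even though exit times are computed from rounded $\beta_i$. For the latter I would first try to compute exit times with a safety margin: revoke when the displacement certifiably exceeds $h_i/2+$ (grid error), which is why the outer half is allowed. For the former, I would fix $b_t$ so that the per-round error is at most $\alpha^{2}(t+1)^{-2}$ and sum over $t$.
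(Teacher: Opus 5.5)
Your architecture matches the paper's. You rerun Lemma~\ref{lem:path}, the ledger \eqref{eq:ledger} and Lemmas~\ref{lem:lifetime}--\ref{lem:potential} with $\tilde v_t,\tilde r_t$ and summable error terms. You use $\gamma_ts_t^2\ge\tfrac1{64}$ for both the $65d$ potential bound and the factor $\tfrac{65}{64}$ in the determinant count. You revoke anywhere in the outer half of the slab, skip small stretches (the threshold is $(t+1)^{-2}$), and store $\det$ and $\adj$.

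The regret half needs only small repairs:
\begin{itemize}
\item Since $\tilde r_t$ can be negative, ``$2\gamma_t\Delta_tr_t\le\alpha r_t$ as before'' is not available. Instead, bound the whole coefficient $-2\alpha+2\gamma_t\Delta_t\in[-3\alpha,-\alpha]$ of $\tilde r_t$, then apply your one-sided bound $\tilde r_t\ge r_t-\epsilon_t$.
\item Truncation toward zero does not give $\xi'\ge0$, only $\xi'(\eta)\ge\sigma(\eta)^2-\sqrt d\,g_t$. The paper uses just $|\xi_t|\le2\alpha s_t^2$ for made stretches, from \eqref{eq:rxi} and $s_t^2>(t+1)^{-2}>\sqrt d\,g_t$.
\item Your inductive bound on $\|\wstar-\hat p\|$ works. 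The paper more simply uses $\|\hat p_t(\eta)\|\le\alpha t$ and lets the $6\lceil\log_2(t+1)\rceil$ bits of $b_t$ absorb the extra factor $t$.
\end{itemize}

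The genuine gaps are in the bit-model half.

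First, polynomially bounded weights do not make $H$ a power of two times a short integer matrix. With $\gamma_t=1/(64s_t^2)$ exactly, $\gamma_t$ is a ratio of long integers that depends on all earlier weights, so denominators compound from stretch to stretch. Your rescaled reply $\tilde v_t$ likewise leaves the grid. Definition~\ref{def:ralg} instead rounds each reply coordinate inward by one unit and rounds the weight up, $\gamma_t=\lceil1/s_t^2\rceil/64$, so that $\kappa_t=64\gamma_t\le(t+1)^2$ is an integer. This puts $h_t$ anywhere in $(16\alpha s_t^2,32\alpha s_t^2]$. Your margin $3.5\,\alpha^2s_t^2$, computed from $h_t=32\alpha s_t^2$, is therefore not available. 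The paper's bound is $\tfrac\alpha4(\tfrac{h_t}2-4\alpha s_t^2)\ge\alpha^2s_t^2$, and the motion and reply errors are paid by the summable $(t+1)^{-2}$ terms rather than by spare margin. Similarly, $\alpha$ must be dyadic for the queries to be exact. Hence $\alpha=R/(8\nu)$ with $\nu$ a power of two, and the factor $\nu/\sqrt d\in[1,2)$ is why the constant is $21$.

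Second, you never bound the number of motions, and your worry about rounded $\beta_i$ misplaces the difficulty. Once $u$ is on the grid, exit times are computed exactly, so $h_i/2\le|\Delta_i|\le h_i$ holds by construction. What must be shown is that a motion cut at the last tick before a slab exit ends in the outer half. Otherwise nothing is revoked, the next motion has $\theta=0$, and the round never ends. This holds because a tick moves the query along $\tilde v_i$ by at most $\alpha g_t$, while the skip threshold gives $h_i/2=16\alpha/\lceil1/s_i^2\rceil\ge16\alpha/t^2>\alpha g_t$. That is Lemma~\ref{lem:rmove}(c)--(d), which bounds the motions of round $t$ by $k_t+1$. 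Combined with $k_t\le|A_t|$, $\sum_tk_t\le T$, and $O(d^4b_t^2)$ bit operations per use of Lemma~\ref{lem:adj}, it yields the stated counts.
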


The constant $21$ in place of the $\tfrac{51}{8}$ of Theorem~\ref{thm:rvm} is the price of the
slack. Through the reduction of Section~\ref{sec:intro}, with half-differences of actions as replies,
the theorem gives a proper learner for online inverse linear optimization in the bit model
(Corollary~\ref{cor:bits}).

\subsection{The rounded algorithm}\label{sec:ralg}

\begin{definition}[Rounded RVM]\label{def:ralg}
Let $\nu$ be the least power of two with $\nu\ge\sqrt d$, let $\alpha:=R/(8\nu)$, and let
$g_t:=2^{-b_t}$, with $b_t$ as in Theorem~\ref{thm:bits}. Start from $p_1=0$ and $A_1=\emptyset$. In
round $t=1,2,\dots$:
\begin{enumerate}
\itemsep0pt
\item Query $p_t$, request $v_t$ at precision $b_t$, receiving $m\in\Z^{d}$, and round it inward to
      $\tilde v_t:=g_tz_t$, where $z_{t,j}:=\sign(m_j)\max\{|m_j|-1,0\}$.
\item Starting at $p=p_t$ and $A=A_t$ with time $\alpha$ left, repeat the following \emph{motion}
      until no time is left. Let $H=I+\sum_{i\in A}\gamma_i\tilde v_i\tilde v_i^{\top}$ be the metric
      of the set $A$ of active stretches, and let $u$ be $H^{-1}\tilde v_t$ with each coordinate
      truncated toward zero to a multiple of $g_t$. Move $p$ to $p+\theta u$, where $\theta$ is the
      largest multiple of $\alpha g_t$ that is at most the time left and at most the time at which
      the ray $p+\theta'u$, $\theta'\ge0$, leaves the slab $L_i$ of some active stretch $i$. Then
      revoke every active stretch $i$ with $|\ip{p-p_{i+1}}{\tilde v_i}|\ge h_i/2$.
\item Set $p_{t+1}:=p$ and $s_t^{2}:=\tilde v_t^{\top}H^{-1}\tilde v_t$ for the metric $H$ left after
      the last motion. If $s_t^{2}>(t+1)^{-2}$, make stretch $t$, with
      \[
        \gamma_t:=\frac{\lceil1/s_t^{2}\rceil}{64},\qquad h_t:=\frac{\alpha}{2\gamma_t},\qquad
        L_t:=\bigl\{p\in\R^{d}:\ |\ip{p-p_{t+1}}{\tilde v_t}|\le h_t\bigr\},
      \]
      and set $A_{t+1}:=A\cup\{t\}$; otherwise set $A_{t+1}:=A$.
\end{enumerate}
\end{definition}

The motions of round $t$ trace a piecewise linear path $\hat p_t:[0,\alpha]\to\R^{d}$, and as in
\eqref{eq:move} we write $\hat A_t(\eta)$ and $\hat H_t(\eta)$ for the set of active stretches and
the metric in force at time $\eta$; they change only where a motion ends,
$H_t:=\hat H_t(0)=I+\sum_{i\in A_t}\gamma_i\tilde v_i\tilde v_i^{\top}$ is the metric at the start of
the round, and $s_t=\|\tilde v_t\|_{\hat H_t(\alpha)^{-1}}$ as in \eqref{eq:stretch}. The precision
makes the grid fine: $2^{b_t}\ge2^{10}d\,(t+1)^{6}$, so $g_t\le2^{-10}d^{-1}(t+1)^{-6}$. A motion
is the move of RVM$(\alpha)$ made discrete: its velocity is $\hat H_t(\eta)^{-1}\tilde v_t$ up to
the truncation, time advances in ticks of $\alpha g_t$, and the motion stops at the last tick before
the query would leave a slab, where every stretch in the outer half of its slab is revoked. The
weight $\gamma_t$ lies between $1/(64s_t^{2})$ and $1/(32s_t^{2})$, an eighth to a quarter of the
weight in RVM$(\alpha)$, and the product $\gamma_th_t=\alpha/2$ is the same as there, so the
half-width $h_t$ lies between $16\alpha s_t^{2}$ and $32\alpha s_t^{2}$, four to eight times as
wide. The threshold for making a stretch keeps every weight at most $(t+1)^{2}/64$, as the threshold
of Proposition~\ref{prop:small} does there.

Figure~\ref{fig:shell} shows a motion that ends in a revocation. The band is the slab $L_i$ of an
active stretch $i$, of half-width $h_i$ around $p_{i+1}$ along $\tilde v_i$, and its outer halves,
the \emph{shells}, are shaded darker. The motion starts at $p$, in the inner half, and advances in
ticks of $\alpha g_t$, the dots, drawn far longer than they are. It stops at $q_i$, the last tick
before the ray, dotted, leaves $L_i$. Since a tick is shorter than $h_i/2$, the point $q_i$ lies in
the shell, so stretch $i$ is revoked there, and the next motion starts in a new direction. Unlike
the move of RVM$(\alpha)$ in Figure~\ref{fig:round}, the revocation point lies only in the outer
half of the slab, not on its boundary; the slack in the weights and half-widths pays for this.

\begin{figure}[t]
\centering
\begin{tikzpicture}[scale=1.0,>={Stealth[length=5pt]}]
  % Slab of stretch i: half-width h_i = 1.6, inner half 0.8. One tick moves (0.7,0.3) from
  % (1.6,-0.3); the ray leaves the slab after 6.33 ticks, so the motion stops after 6, in the shell.
  \fill[okblue,fill opacity=0.07] (0,-0.8) rectangle (9,0.8);
  \fill[okblue,fill opacity=0.2] (0,0.8) rectangle (9,1.6);
  \fill[okblue,fill opacity=0.2] (0,-1.6) rectangle (9,-0.8);
  \draw[slabedge] (0,1.6) -- (9,1.6);
  \draw[slabedge] (0,-1.6) -- (9,-1.6);
  \draw[okblue,dash pattern=on 2pt off 1.5pt,line width=0.5pt] (0,0.8) -- (9,0.8);
  \draw[okblue,dash pattern=on 2pt off 1.5pt,line width=0.5pt] (0,-0.8) -- (9,-0.8);
  \draw[black!40,line width=0.4pt] (0,0) -- (9,0);
  \fill (0.7,0) circle (0.045);
  \node[lbl,anchor=north] at (0.7,-0.05) {$p_{i+1}$};
  \draw[decorate,decoration={brace,amplitude=3pt}] (-0.1,0) -- (-0.1,0.8)
    node[lbl,midway,left=3pt] {$h_i/2$};
  \draw[decorate,decoration={brace,amplitude=4pt}] (9.1,1.6) -- (9.1,0)
    node[lbl,midway,right=4pt] {$h_i$};
  \node[lbl,okblue] at (3.0,1.2) {shell};
  \node[lbl,okblue] at (3.0,-1.2) {shell};
  \node[lbl,okblue,anchor=south east] at (8.95,-1.58) {$L_i$};
  \draw[->,okblue,line width=0.6pt] (8.1,-0.55) -- (8.1,-0.05);
  \node[lbl,okblue,anchor=east] at (8.07,-0.3) {$\tilde v_i$};
  % the ray beyond the last tick, to the slab boundary and the first tick outside
  \draw[ghost] (5.8,1.5) -- (6.5,1.8);
  \draw[black!45,line width=0.5pt,fill=white] (6.5,1.8) circle (0.045);
  % the motion, in ticks, and the next motion after the revocation
  \draw[qpath] (1.6,-0.3) -- (5.8,1.5);
  \draw[qpath,->] (5.8,1.5) -- (6.8,2.5);
  \foreach \k in {1,...,5} \fill[black!75] ($(1.6,-0.3)+\k*(0.7,0.3)$) circle (0.04);
  \fill (1.6,-0.3) circle (0.045);
  \node[lbl,anchor=north west] at (1.62,-0.33) {$p$};
  \draw[line width=0.6pt,fill=white] (5.8,1.5) circle (0.06);
  \node[lbl,anchor=north west] at (5.85,1.44) {$q_i$};
\end{tikzpicture}
\caption{A motion of the rounded RVM, not to scale. It advances in ticks (dots) and stops at $q_i$,
the last tick before the ray (dotted) leaves the slab $L_i$, inside the darker outer half, where
stretch $i$ is revoked.}
\label{fig:shell}
\end{figure}

The next lemma plays the role of Lemma~\ref{lem:wellposed}. Besides the facts about the motions, it
records the size of the rounding errors and of the quantities they are multiplied by.

\begin{lemma}[Rounded move]\label{lem:rmove}
In every round $t$ of the rounded RVM:
\begin{enumerate}
\itemsep0pt
\item[(a)] $\|\tilde v_t\|\le\|v_t\|\le1$ and $\|\tilde v_t-v_t\|\le2\sqrt d\,g_t$, and the velocity
      $u$ of every motion satisfies $\|u\|\le1$ and $\|u-\hat H^{-1}\tilde v_t\|\le\sqrt d\,g_t$ for
      the metric $\hat H$ of that motion, and $s_t\le1$;
\item[(b)] every weight satisfies $\gamma_i\le(i+1)^{2}/64$, and for all $\eta$,
      $\|\hat p_t(\eta)\|\le\alpha t$, $\tr\hat H_t(\eta)\le d+t^{3}/64$ and
      $I\preceq\hat H_t(\eta)\preceq(t+1)^{3}I$;
\item[(c)] at the start of every motion, $|\ip{p-p_{i+1}}{\tilde v_i}|<h_i/2$ for every active
      stretch $i$;
\item[(d)] every motion lasts a positive multiple of $\alpha g_t$, and every motion but the last
      revokes a stretch, so the round has at most $k_t+1$ motions, where $k_t$ is the number of
      stretches it revokes;
\item[(e)] every stretch $i$ stays in $L_i$ while it is active, and if it is revoked at a point
      $q_i$ then $h_i/2\le|\ip{q_i-p_{i+1}}{\tilde v_i}|\le h_i$.
\end{enumerate}
\end{lemma}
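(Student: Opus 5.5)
The plan is to prove the five parts in the order (a), (b), then (c)--(e) together by induction over the motions of the whole run, mirroring Lemma~\ref{lem:wellposed}.

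\emph{Part (a).} Rounding inward gives $|z_{t,j}g_t|\le|v_{t,j}|$ coordinatewise. The reason is that $|m_j|g_t\le|v_{t,j}|+g_t$ by \eqref{eq:access}, and the sign is preserved whenever $z_{t,j}\neq0$. Hence $\|\tilde v_t\|\le\|v_t\|\le1$. Each coordinate moves by at most $2g_t$, which gives $\|\tilde v_t-v_t\|\le2\sqrt d\,g_t$. For the velocity, $\hat H\succeq I$ gives $\|\hat H^{-1}\tilde v_t\|\le\|\tilde v_t\|\le1$. Truncating toward zero only shrinks coordinates, so $\|u\|\le1$, and each coordinate moves by less than $g_t$, which gives the $\sqrt d\,g_t$ bound. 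Finally $s_t^2\le\|\tilde v_t\|^2\le1$.

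\emph{Part (b).} A stretch is made only if $s_i^2>(i+1)^{-2}$, so $\lceil1/s_i^2\rceil\le(i+1)^2$ and $\gamma_i\le(i+1)^2/64$. Since $\|\tilde v_i\|\le1$,
\[
\tr\hat H_t(\eta)\le d+\sum_{i<t}(i+1)^2/64\le d+t^3/64 .
\]
The operator norm is at most $1+\sum_{i<t}\gamma_i\le(t+1)^3$. The path has speed $\|u\|\le1$ for total time $\alpha$ per round, so $\|\hat p_t(\eta)\|\le\alpha t$ by induction from $p_1=0$.

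\emph{Part (c).} This is the inductive invariant. At the start of a round's first motion, an old active stretch survived the last revocation check, so its displacement is $<h_i/2$. The new stretch $t$ is centered at $p_{t+1}$, so its displacement is $0$. Between motions, the revocation step removes exactly the stretches with displacement $\ge h_i/2$, so every survivor has displacement $<h_i/2$.

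\emph{Parts (d) and (e).} Here lies the main obstacle: showing that a single tick $\alpha g_t$ moves the query along $\tilde v_i$ by less than $h_i/2$. Only then does a motion last a positive number of ticks, and only then does the last tick before exit land in the shell. The displacement per unit time is at most $\|u\|\le1$, so one tick moves it by at most $\alpha g_t$. The weight bound in (b) gives $h_i=\alpha/(2\gamma_i)\ge32\alpha/(i+1)^2$, and $i<t$, so it suffices that $g_t<16/(t+1)^2$. This follows comfortably from $g_t\le2^{-10}d^{-1}(t+1)^{-6}$.

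Using (c), the exit time of the ray from each active slab is therefore more than one tick away. Hence $\theta\ge\alpha g_t$, unless the time left is the limiting factor. Time left is always a multiple of $\alpha g_t$, since $\alpha$ is: one must check that $\alpha/(\alpha g_t)=2^{b_t}$ is an integer. Thus every motion has positive length.

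If $\theta$ is limited by an exit time rather than by the time left, let $i$ be the stretch whose slab the ray leaves first. The endpoint lies within one tick of the boundary of $L_i$, so its displacement along $\tilde v_i$ is in $(h_i-h_i/2,h_i]$, and stretch $i$ is revoked. Consequently every motion but the last revokes a stretch, which bounds the number of motions by $k_t+1$.

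For (e), slabs are closed and $\theta$ never exceeds any exit time, so the path stays in $L_i$ while $i$ is active. Revocation happens only when the displacement is $\ge h_i/2$, and membership in $L_i$ gives $\le h_i$.
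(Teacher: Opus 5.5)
Your proposal is correct and follows essentially the same route as the paper. Parts (a) and (b) use the same coordinatewise rounding and trace bounds, and (c)--(e) use the same induction over motions, whose key estimate is that each active slab's exit time exceeds $h_i/2\ge 16\alpha/t^{2}>\alpha g_t$ because the displacement starts below $h_i/2$ and changes at rate $|\ip{u}{\tilde v_i}|\le1$; the only difference, cosmetic, is that you read (c) off the revocation rule directly rather than folding it into the induction.
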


\begin{proof}
(a) By \eqref{eq:access}, $|m_j-2^{b_t}v_{t,j}|\le1$. If $|m_j|\ge2$, then $z_{t,j}$ has the sign of
$v_{t,j}$ and $|z_{t,j}|=|m_j|-1\le2^{b_t}|v_{t,j}|$; if $|m_j|\le1$, then $z_{t,j}=0$. In both cases
$|\tilde v_{t,j}|\le|v_{t,j}|$ and $|\tilde v_{t,j}-v_{t,j}|\le2g_t$. Truncation toward zero moves
each coordinate by less than $g_t$ and does not increase its magnitude, and
$\|\hat H^{-1}\tilde v_t\|\le\|\tilde v_t\|\le1$ and $s_t^{2}\le\|\tilde v_t\|^{2}\le1$ as
$\hat H\succeq I$.

(b) If stretch $i$ is made, then $1/s_i^{2}<(i+1)^{2}$, so $64\gamma_i=\lceil1/s_i^{2}\rceil\le
(i+1)^{2}$. The stretches active in round $t$ were made in rounds $i<t$, with $\|\tilde v_i\|\le1$,
and $\sum_{i<t}(i+1)^{2}\le t^{3}$, which gives the bounds on $\hat H_t(\eta)$. By (a), every round
moves the query a distance at most $\alpha$, from $p_1=0$.

(c)--(e) A new stretch is made at the center $p_{t+1}$ of its slab, so it suffices to show that if
(c) holds at the start of a motion, then the motion satisfies (d) and (e) and (c) holds at its end
after the revocations. Let $u$ be its velocity, and for an active stretch $i$ let
$\phi_i:=\ip{p-p_{i+1}}{\tilde v_i}$ and $\beta_i:=\ip{u}{\tilde v_i}$. If $\beta_i\neq0$, the ray
leaves $L_i$ at time
\[
  \frac{h_i-\sign(\beta_i)\phi_i}{|\beta_i|}\;>\;\frac{h_i}{2}\;=\;\frac{16\alpha}{\lceil1/s_i^{2}\rceil}
  \;\ge\;\frac{16\alpha}{t^{2}}\;>\;\alpha g_t ,
\]
using $|\beta_i|\le1$ by (a), (b) for $i<t$, and $g_t<t^{-2}$. Hence the motion lasts a positive
multiple of $\alpha g_t$ and stays in every slab. If it is not the last motion of the round, it
ends less than $\alpha g_t$ before the ray leaves some $L_i$, at a point $p+\theta u$ where
$|\phi_i+\theta\beta_i|>h_i-\alpha g_t>h_i/2$, so stretch $i$ is revoked. The stretches revoked at
the end of the motion are those in the outer half of their slabs, which gives (e), and the others
satisfy (c). As the motions last multiples of $\alpha g_t$ that sum to $\alpha$, there are finitely
many, and (d) follows because a revoked stretch never returns.
\end{proof}

Parts (c)--(e) are the counterparts of Lemma~\ref{lem:wellposed}(a), (b) and (d): a motion never
leaves a slab, and it revokes a stretch only in the outer half of its slab, at a displacement
between $h_i/2$ and $h_i$ instead of exactly $h_i$.

\subsection{Regret}\label{sec:rregret}

We prove the bounds on $\Reg_T$ and $|A_t|$ in Theorem~\ref{thm:bits} by following
Section~\ref{sec:analysis} and pointing out what changes. Fix $T$, $\wstar$ and the oracle, and in
round $t$ put
\[
  \tilde r_t\;:=\;\ip{\wstar-p_t}{\tilde v_t},\qquad
  \xi(\eta)\;:=\;\ip{\hat p_t(\eta)-p_t}{\tilde v_t},\qquad \xi_t\;:=\;\xi(\alpha),
\]
the loss against the rounded reply and the distance moved along it. For every stretch $t$ made, the
point $q_t$ and the displacement $\Delta_t:=\ip{q_t-p_{t+1}}{\tilde v_t}$ are as in
Section~\ref{sec:move}, and the surviving stretches form the set $A_{T+1}$. As
$\|\wstar\|\le R=8\nu\alpha<16\sqrt d\,\alpha$, Lemma~\ref{lem:rmove}(b) gives
$\|\hat p_t(\eta)-\wstar\|\le\alpha(t+16\sqrt d)\le16\sqrt d\,(t+1)\alpha$ throughout round $t$,
and with Lemma~\ref{lem:rmove}(a),
\begin{equation}\label{eq:rtilde}
  |\tilde r_t-r_t|\;\le\;\|\wstar-p_t\|\,\|\tilde v_t-v_t\|\;\le\;32\,d\,(t+1)\,g_t\,\alpha .
\end{equation}

\emph{One move.} On a motion the metric $H=\hat H_t(\eta)$ is fixed and
$\hat p_t'=H^{-1}\tilde v_t+\epsilon$ with $\|\epsilon\|\le\sqrt d\,g_t$, so, as in the proof of
Lemma~\ref{lem:path},
\[
  \frac{d}{d\eta}\,\|\hat p_t(\eta)-\wstar\|_{H}^{2}
  \;=\;-2\bigl(\tilde r_t-\xi(\eta)\bigr)+2\ip{\hat p_t(\eta)-\wstar}{H\epsilon} .
\]
Here $\xi'(\eta)=\sigma(\eta)^{2}+\ip{\epsilon}{\tilde v_t}$, with
$\sigma(\eta)^{2}:=\tilde v_t^{\top}\hat H_t(\eta)^{-1}\tilde v_t\in[0,s_t^{2}]$ since the metric
only loses stretches during the round, so
\begin{equation}\label{eq:rxi}
  -\eta\sqrt d\,g_t\;\le\;\xi(\eta)\;\le\;\eta\bigl(s_t^{2}+\sqrt d\,g_t\bigr) .
\end{equation}
The last term of the derivative is at most $2\cdot16\sqrt d\,(t+1)\alpha\cdot(t+1)^{3}\cdot\sqrt
d\,g_t$ by Lemma~\ref{lem:rmove}(b). Integrating over $[0,\alpha]$ and subtracting the refund of each
stretch revoked in the round,
\[
  \|p_{t+1}-\wstar\|_{\hat H_t(\alpha)}^{2}-\|p_t-\wstar\|_{H_t}^{2}
  \;\le\;-2\alpha\tilde r_t+\alpha^{2}s_t^{2}+33\,d\,(t+1)^{4}g_t\,\alpha^{2}
  -\sum_{i\in A_t\setminus\hat A_t(\alpha)}\gamma_i\ip{\wstar-q_i}{\tilde v_i}^{2} .
\]

\emph{The ledger.} The penalty of stretch $t$ is $\gamma_t(\tilde r_t-\xi_t)^{2}$ and its refund
$\gamma_t(\tilde r_t-\xi_t-\Delta_t)^{2}$, so the ledger of Section~\ref{sec:move} holds with
$\tilde v_t$ and $\tilde r_t$ in place of $v_t$ and $r_t$, each round that makes no stretch
contributing its move alone:
\[
  \|p_{T+1}-\wstar\|^{2}-\|\wstar\|^{2}\;\le\;\sum_{t=1}^{T}\Lambda_t,
\]
where now $\Lambda_t:=-2\alpha\tilde r_t+\alpha^{2}s_t^{2}+33\,d\,(t+1)^{4}g_t\,\alpha^{2}$, plus
$\gamma_t\bigl(2\Delta_t\tilde r_t-\Delta_t(\Delta_t+2\xi_t)\bigr)$ if stretch $t$ is made.

\emph{The lifetime of a stretch.} Suppose that stretch $t$ is made. By Lemma~\ref{lem:rmove}(e),
$|\Delta_t|\le h_t$, and $\gamma_th_t=\alpha/2$, so the coefficient $-2\alpha+2\gamma_t\Delta_t$ of
$\tilde r_t$ lies in $[-3\alpha,-\alpha]$; as $r_t\ge0$, \eqref{eq:rtilde} bounds the part of
$\Lambda_t$ linear in $\tilde r_t$ by $-\alpha r_t+96\,d\,(t+1)g_t\,\alpha^{2}$. By \eqref{eq:rxi}
and $s_t^{2}>(t+1)^{-2}>\sqrt d\,g_t$, $|\xi_t|\le2\alpha s_t^{2}$, and
$h_t=32\alpha/\lceil1/s_t^{2}\rceil>16\alpha s_t^{2}$ since $\lceil1/s_t^{2}\rceil<2/s_t^{2}$. If
stretch $t$ is revoked, then $|\Delta_t|\ge h_t/2>8\alpha s_t^{2}$ by Lemma~\ref{lem:rmove}(e), and
as $a\mapsto a(a-4\alpha s_t^{2})$ is increasing for $a\ge2\alpha s_t^{2}$,
\[
  \gamma_t\Delta_t(\Delta_t+2\xi_t)\;\ge\;\gamma_t|\Delta_t|\bigl(|\Delta_t|-4\alpha s_t^{2}\bigr)
  \;\ge\;\gamma_t\,\frac{h_t}{2}\Bigl(\frac{h_t}{2}-4\alpha s_t^{2}\Bigr)
  \;=\;\frac{\alpha}{4}\Bigl(\frac{h_t}{2}-4\alpha s_t^{2}\Bigr)\;\ge\;\alpha^{2}s_t^{2} ,
\]
so the refund pays for the move, as in Lemma~\ref{lem:lifetime}. If stretch $t$ survives, completing
the square as there leaves $\alpha^{2}s_t^{2}+\gamma_t\xi_t^{2}\le\alpha^{2}s_t^{2}(1+4\gamma_t
s_t^{2})\le\tfrac98\alpha^{2}s_t^{2}$, as $\gamma_ts_t^{2}<\tfrac{1}{32}$. If no stretch is made,
\eqref{eq:rtilde} gives $-2\alpha\tilde r_t\le-\alpha r_t+64\,d\,(t+1)g_t\,\alpha^{2}$, and
$\alpha^{2}s_t^{2}\le\alpha^{2}(t+1)^{-2}$. The rounding errors add up to at most
$129\,d\,(t+1)^{4}g_t\,\alpha^{2}\le\tfrac{129}{1024}(t+1)^{-2}\alpha^{2}$, so in every case
\[
  \Lambda_t\;\le\;-\alpha r_t+\tfrac87\,\alpha^{2}(t+1)^{-2},\quad\text{plus }
  \tfrac98\,\alpha^{2}s_t^{2}\text{ if stretch $t$ is made and survives.}
\]

\emph{The surviving stretches.} The proof of Lemma~\ref{lem:potential} applies with $\tilde v_t$ in
place of $v_t$, using $\|\tilde v_t\|\le1$ in the Cauchy--Schwarz step. In its notation,
$\gamma_t\bar s_t^{2}\ge\gamma_ts_t^{2}\ge\tfrac{1}{64}$, so the decrease of the potential at a
surviving stretch $t$ is at least $\bar s_t^{2}\cdot\gamma_t\bar s_t^{2}/(1+\gamma_t\bar
s_t^{2})\ge\bar s_t^{2}/65\ge s_t^{2}/65$, and $\sum_{t\in A_{T+1}}s_t^{2}\le65d$.

\emph{Conclusion.} Summing over $t$, with $\|\wstar\|\le R$ and
$\sum_{t\ge1}(t+1)^{-2}=\pi^{2}/6-1<0.65$,
\[
  \alpha\,\Reg_T+\|p_{T+1}-\wstar\|^{2}\;\le\;R^{2}+\frac98\cdot65\,d\alpha^{2}
  +\frac87\cdot0.65\,\alpha^{2}\;\le\;R^{2}+74\,d\alpha^{2} .
\]
With $\rho:=\nu/\sqrt d\in[1,2)$, this gives
$\Reg_T\le R^{2}/\alpha+74d\alpha=(8\rho+\tfrac{37}{4\rho})R\sqrt d$. The coefficient is convex in
$\rho$, so it is at most the larger of its values at the endpoints, $\tfrac{69}{4}$ at $\rho=1$ and
$\tfrac{165}{8}<21$ at $\rho=2$.

\emph{Active stretches.} The proof of Proposition~\ref{prop:small} applies with $\tilde v_i$ in place
of $v_i$: each factor of the determinant is at least $1+\gamma_is_i^{2}\ge\tfrac{65}{64}$, and
$\tr H_t\le d+t^{3}/64$ by Lemma~\ref{lem:rmove}(b). Hence
$(\tfrac{65}{64})^{|A_t|}\le\det H_t\le(1+t^{3}/(64d))^{d}$, and the bound on $|A_t|$ follows from
$\ln\tfrac{65}{64}\ge\tfrac{1}{65}$. \qed

\subsection{Exact integer arithmetic}\label{sec:integer}

We now run the rounded RVM in integer arithmetic, with no rounding beyond the truncations of
Definition~\ref{def:ralg}, and bound the lengths of the integers involved. The precisions $b_t$ are
nondecreasing in $t$, so $g_{t-1}$ is a multiple of $g_t$. In round $t$
every motion moves the query by $\theta u$ with $\theta\in\alpha g_t\Z$ and $u\in g_t\Z^{d}$, so by
induction, throughout round $t$, $p=\alpha2^{-2b_t}Y$ for an integer vector $Y$: the queries are
computed exactly. Write $b:=b_t$ and $\mu:=2b+6$. An active stretch $i$, made in round $i$, has
$\tilde v_i=2^{-b_i}z_i$ and $p_{i+1}=\alpha2^{-2b_i}Y_{i+1}$ for integer vectors $z_i$ and
$Y_{i+1}$, and an integer $\kappa_i:=64\gamma_i=\lceil1/s_i^{2}\rceil$; we write
$\bar z_i:=2^{b-b_i}z_i$ and $\bar Y_{i+1}:=4^{b-b_i}Y_{i+1}$ for these vectors at the current
precision. The metric is then $H=2^{-\mu}G$ for the integer matrix
\[
  G\;:=\;2^{\mu}I+\sum_{i\in A}\kappa_i\,\bar z_i\bar z_i^{\top} ,
\]
and the algorithm stores, in place of $H^{-1}$, the integer $D:=\det G$ and the integer matrix
$J:=\adj G=DG^{-1}$, so that $H^{-1}=2^{\mu}J/D$. Every stretch made or revoked changes $G$ by a
rank-one term, and $D$ and $J$ follow by the next lemma, a fraction-free form of Sherman--Morrison.

\begin{lemma}[Adjugate update]\label{lem:adj}
Let $G$ and $G'=G\pm\kappa zz^{\top}$ be positive definite integer matrices, with $\kappa\in\Z$
and $z\in\Z^{d}$, and let $D=\det G$, $J=\adj G$ and $y=Jz$. Then
\[
  \det G'\;=\;D\pm\kappa\ip{z}{y},\qquad
  \adj G'\;=\;\frac{(\det G')\,J\mp\kappa\,yy^{\top}}{D},
\]
and the division is exact.
\end{lemma}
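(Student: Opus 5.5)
The plan is to derive both identities from the matrix determinant lemma and the Sherman--Morrison formula, rewritten without denominators. Exactness of the division then follows from integrality of the adjugate of an integer matrix. Throughout we use $G^{-1}=J/D$, which is valid since $G$ is positive definite and so $D>0$.

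\emph{Determinant.} By the matrix determinant lemma,
\[
\det G'=\det(G)\,\bigl(1\pm\kappa z^{\top}G^{-1}z\bigr)=D\pm\kappa\,z^{\top}Jz=D\pm\kappa\ip{z}{y}.
\]
This is the first identity, and $D':=\det G'$ is an integer. It is positive because $G'$ is positive definite.

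\emph{Adjugate.} Sherman--Morrison gives
\[
G'^{-1}=G^{-1}\mp\frac{\kappa\,G^{-1}zz^{\top}G^{-1}}{1\pm\kappa z^{\top}G^{-1}z}.
\]
Its denominator is $D'/D\neq0$. Substituting $G^{-1}=J/D$ and $G^{-1}z=y/D$ gives
\[
G'^{-1}=\frac{J}{D}\mp\frac{\kappa\,yy^{\top}/D^{2}}{D'/D}=\frac{J}{D}\mp\frac{\kappa\,yy^{\top}}{DD'}.
\]
Multiplying by $D'$, we get
\[
\adj G'=D'G'^{-1}=\frac{D'J\mp\kappa\,yy^{\top}}{D}.
\]
The symmetry of $J$ is used in writing $y^{\top}=z^{\top}J$. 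The identity can also be checked directly by multiplying the claimed right-hand side by $G'$ and expanding. That check uses $GJ=DI$ and $z^{\top}y=\ip{z}{y}$, and it avoids any concern about which formula applies for which sign.

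\emph{Exactness.} The adjugate of an integer matrix has integer entries, since they are cofactors. Hence $\adj G'$ is an integer matrix, and it equals the quotient $(D'J\mp\kappa yy^{\top})/D$. The numerator is an integer matrix, so every entry of it is divisible by $D$. The only step needing any care is this observation that exactness is inherited from integrality of cofactors rather than proved by a divisibility computation. The rest is the routine algebra above.
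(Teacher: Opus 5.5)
Your proposal is correct and follows the paper's proof exactly: the matrix determinant lemma for $\det G'$, Sherman--Morrison with $G^{-1}=J/D$ substituted and the result multiplied by $\det G'$, and exactness of the division from integrality of the adjugate. Your explicit remarks that $D'\neq0$ makes the Sherman--Morrison denominator nonzero, and that the symmetry of $J$ gives $G^{-1}zz^{\top}G^{-1}=yy^{\top}/D^{2}$, are details the paper leaves implicit.
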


\begin{proof}
By the matrix determinant lemma, $\det G'=D(1\pm\kappa z^{\top}G^{-1}z)=D\pm\kappa\ip{z}{y}$. By
Sherman--Morrison,
$G'^{-1}=G^{-1}\mp\kappa\,G^{-1}zz^{\top}G^{-1}/(1\pm\kappa z^{\top}G^{-1}z)$; multiplying by
$\det G'$ and substituting $G^{-1}=J/D$ gives the formula for $\adj G'=(\det G')\,G'^{-1}$. The
adjugate of an integer matrix is an integer matrix, so the division is exact.
\end{proof}

Starting from $Y=0$, $A=\emptyset$, $D=2^{d\mu}$ and $J=2^{(d-1)\mu}I$ at $\mu=2b_1+6$, and with
$b_0:=b_1$, round $t$ runs as follows.
\begin{enumerate}
\itemsep0pt
\item Output $p_t=\alpha2^{-2b_{t-1}}Y$. Multiply $Y$ by $4^{b-b_{t-1}}$, $D$ by $4^{d(b-b_{t-1})}$
      and $J$ by $4^{(d-1)(b-b_{t-1})}$, as the change of precision multiplies $G$ by
      $4^{b-b_{t-1}}$. Request $v_t$, compute $z_t$, and set $N:=2^{b}$, the number of ticks left.
\item While $N>0$, compute $y:=Jz_t$ and $U_j:=\sign(y_j)\lfloor2^{\mu}|y_j|/D\rfloor$, so that
      $u=g_tU$, since $H^{-1}\tilde v_t=g_t2^{\mu}y/D$. For each active $i$ compute
      $\Phi_i:=\ip{Y-\bar Y_{i+1}}{\bar z_i}$ and $\Psi_i:=\ip{U}{\bar z_i}$, so that
      $\ip{p-p_{i+1}}{\tilde v_i}=\alpha2^{-3b}\Phi_i$ and $\ip{u}{\tilde v_i}=2^{-2b}\Psi_i$; as
      $h_i=32\alpha/\kappa_i$, if $\Psi_i\neq0$ the ray leaves $L_i$ after
      $(2^{3b+5}-\sign(\Psi_i)\kappa_i\Phi_i)/(\kappa_i|\Psi_i|)$ ticks. Let $n$ be the least of $N$
      and the integer parts of these numbers, set $Y\leftarrow Y+nU$ and $N\leftarrow N-n$, and
      revoke, one at a time, every active $i$ with $\kappa_i|\Phi_i+n\Psi_i|\ge2^{3b+4}$, updating
      $D$ and $J$ by Lemma~\ref{lem:adj} with the minus sign, $\kappa=\kappa_i$ and $z=\bar z_i$.
\item Compute $S:=\ip{z_t}{Jz_t}$, so that $s_t^{2}=64S/D$. If $64(t+1)^{2}S>D$, make stretch
      $t$: store $z_t$, $Y_{t+1}:=Y$, $b_t$ and $\kappa_t:=\lceil D/(64S)\rceil$, and update $D$ and
      $J$ by Lemma~\ref{lem:adj} with the plus sign, $\kappa=\kappa_t$ and $z=z_t$.
\end{enumerate}
The only roundings are those of Definition~\ref{def:ralg}: the inward rounding of the reply, the
floors in $U$ and in the numbers of ticks, and the ceiling in $\kappa_t$. Every matrix $G$ met along
the way, including those between the revocations of one motion, is $2^{\mu}I$ plus positive
semidefinite terms, so Lemma~\ref{lem:adj} applies.

\paragraph{Bit lengths and cost.} We prove the cost bounds of Theorem~\ref{thm:bits}. Throughout
round $t$, Lemma~\ref{lem:rmove} gives $\|p\|\le\alpha t$, hence $|Y_j|\le4^{b}t$, and likewise for
$\bar Y_{i+1}$; also $|\bar z_{i,j}|\le2^{b}$, $|U_j|\le2^{b}$ and $\kappa_i\le(t+1)^{2}$. As
$\log_2(dt)\le b$, these integers, the $\Phi_i$, the $\Psi_i$ and the numerators of the numbers of
ticks have $O(b)$ bits.
Since $G=2^{\mu}H\succeq2^{\mu}I$, and $\tr H\le d+(t+1)^{3}/64$ even after stretch $t$ is made, the
inequality of arithmetic and geometric means gives $2^{d\mu}\le D\le2^{d\mu}(1+(t+1)^{3}/(64d))^{d}$,
and $|J_{jk}|\le\|J\|=D\|G^{-1}\|\le2^{-\mu}D$; so $D$, the entries of $J$ and $y$, $S$ and the
integers in Lemma~\ref{lem:adj} have $O(db)$ bits. A motion therefore costs $O(d^{3}b^{2})$ bit
operations for $y$ and $U$ and $O(|A_t|db^{2})$ for the scan of the active stretches, each use of
Lemma~\ref{lem:adj} costs $O(d^{4}b^{2})$, and the change of precision and the request cost
$O(d^{3}b)$. Writing $p_t$ in mantissa--exponent form multiplies the mantissa of $R$ by $d$ integers
of $O(b)$ bits, at a cost of $O(d\ell_Rb)$. By Lemma~\ref{lem:rmove}(d), round $t$ thus uses
\[
  O\bigl((k_t+1)(d^{4}+d|A_t|)\,b_t^{2}+d\ell_Rb_t\bigr)
\]
bit operations, and the counts of Theorem~\ref{thm:bits} follow from $k_t\le|A_t|=O(d\log(t+1))$ and
$\sum_{t\le T}k_t\le T$. The memory holds $D$ and $J$, in $O(d^{3}b_t)$ bits, the $|A_t|$ active
stretches, in $O(db_t)$ bits each, and the query, in $O(d(\ell_R+b_t))$ bits. \qed

\subsection{Inverse optimization in the bit model}\label{sec:bitsoilo}

For online inverse linear optimization we use the reduction of Section~\ref{sec:intro} with one
change: the reply is the half-difference $\tfrac12(x_t-\hat x_t)$ rather than the normalized
difference, which avoids the square root and needs no precision beyond that of the actions.

\begin{corollary}[Inverse optimization in the bit model]\label{cor:bits}
Suppose that the actions are read through the coordinate interface \eqref{eq:access}. There is a
deterministic proper learner for online inverse linear optimization with $R_T\le42\sqrt d$ for every
$T$, every $\wstar\in\Ball$ and every adaptive sequence of action sets. Its round $t$ uses one
linear optimization over $X_t$, for an integer objective with $O(b_t)$ bits per coordinate, and
$O((d^{5}+d^{3}\log(t+1))\log(t+1)\,b_t^{2})$ bit operations, and its first $T$ rounds use
$O((d^{4}+d^{2}\log(T+1))\,b_T^{2})$ bit operations per round on average.
\end{corollary}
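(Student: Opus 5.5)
We run the rounded RVM of Definition~\ref{def:ralg} on $K_1=\Ball$, so that $R=1$, through the reduction of Section~\ref{sec:intro}, changed in one place: the reply is $v_t:=\tfrac12(x_t-\hat x_t)$ instead of the normalized difference. The objective for round $t$ is the current query $p_t=\alpha2^{-2b_{t-1}}Y$. We optimize over $X_t$ with the integer objective $Y$ instead, which has the same maximizers because $\alpha2^{-2b_{t-1}}>0$. By Lemma~\ref{lem:rmove}(b), $|Y_j|\le4^{b}t$, so $Y$ has $O(b_t)$ bits per coordinate. If $Y=0$ we use any fixed nonzero integer objective, as in the reduction, and the round is still legal, as checked next.

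\emph{Legality.} Since $\hat x_t$ is optimal for $p_t$, we have $\ip{p_t}{x_t-\hat x_t}\le0$. Since $x_t$ is optimal for $\wstar$, we have $\ip{\wstar}{x_t-\hat x_t}\ge0$. Hence $r_t=\ip{\wstar-p_t}{v_t}\ge0$. Both actions lie in $\Ball$, so $\|v_t\|\le1$, which is exactly the class of replies Theorem~\ref{thm:bits} allows; unit length is not needed there. Rounds with $x_t=\hat x_t$ have $v_t=0$ and cost nothing. We can either skip them, as in Corollary~\ref{cor:oilo}, or feed them in, which is harmless; skipping keeps the game's round index at most $t$.

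\emph{Regret.} The shortfall satisfies
\[
\ip{\wstar}{x_t-\hat x_t}=2\ip{\wstar}{v_t}\le2\ip{\wstar-p_t}{v_t}=2r_t,
\]
so $R_T\le2\Reg_T\le42\sqrt d$ by Theorem~\ref{thm:bits}.

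\emph{The interface.} This is the step that needs care. The learner reads $x_t$ and $\hat x_t$ only through \eqref{eq:access}. Requesting both at precision $b_t+1$ returns integer vectors $m,m'$ with
\[
\|2^{-b_t-1}m-x_t\|_\infty\le2^{-b_t-1},\qquad \|2^{-b_t-1}m'-\hat x_t\|_\infty\le2^{-b_t-1}.
\]
Their half-difference, rescaled, is $2^{-b_t}\cdot\tfrac12(m-m')$, and it is within $2^{-b_t}$ of $v_t$ in every coordinate. The obstacle is that $\tfrac12(m-m')$ may be a half-integer. I would handle this by rounding it toward zero to an integer. That adds at most $\tfrac12$ grid unit and gives error at most $\tfrac32\cdot2^{-b_t}$. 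To restore the exact bound of \eqref{eq:access}, I would request at precision $b_t+2$ instead, so that the combined error stays within $2^{-b_t}$. This yields a valid implementation of the interface for $v_t$ at $O(db_t)$ bit operations. Any such implementation suffices, because the proof of Theorem~\ref{thm:bits} uses only \eqref{eq:access} and $\|v_t\|\le1$.

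\emph{Cost.} The remaining costs are those of Theorem~\ref{thm:bits} with $\ell_R=O(1)$ for $R=1$. Forming $Y$ at the current precision is already included in those costs. With $\ell_R=O(1)$, the stated per-round bound and the amortized bound follow directly.
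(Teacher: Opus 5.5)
Your route is the paper's: run the rounded RVM with $R=1$, optimize with the integer objective $Y$ (a positive multiple of $p_t$, or a fixed vector if $Y=0$), reply with the half-difference $v_t=\tfrac12(x_t-\hat x_t)$, check legality, and get $R_T\le2\Reg_T\le42\sqrt d$ from Theorem~\ref{thm:bits}. That part is fine. \textbf{The gap is in the interface step, which you flagged as delicate: as written, it does not produce a vector satisfying \eqref{eq:access}.}

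\emph{A factor of $2$.} At precision $b_t+1$ you have $|m_j-2^{b_t+1}x_{t,j}|\le1$ and $|m'_j-2^{b_t+1}\hat x_{t,j}|\le1$. So $2^{-b_t}\cdot\tfrac12(m-m')=2^{-b_t-1}(m-m')$ is within $2^{-b_t}$ of $x_t-\hat x_t=2v_t$, not of $v_t$. Fed to the learner, it is a reply of norm up to $2$, outside the class Theorem~\ref{thm:bits} allows. The correct half-difference is $2^{-b_t}\cdot\tfrac14(m-m')$. Here $\tfrac14(m_j-m'_j)$ is a quarter-integer within $\tfrac12$ of $2^{b_t}v_{t,j}$.

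\emph{The rounding direction.} Rounding that quarter-integer toward zero can move it by $\tfrac34$, for a total error of up to $\tfrac54$. For example, $2^{b_t+1}x_{t,j}=5$, $m_j=4$, $\hat x_{t,j}=0$, $m'_j=1$ gives $\tfrac34\mapsto0$ against $2^{b_t}v_{t,j}=\tfrac54$. Going to precision $b_t+2$ does not rescue toward-zero rounding either. Then $\tfrac18(m_j-m'_j)$ is within $\tfrac14$ of $2^{b_t}v_{t,j}$, but truncation can move it by $\tfrac78$, for a total of $\tfrac98>1$.

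\emph{The fix.} Round to a nearest integer. At precision $b_t+1$ the error is at most $\tfrac12+\tfrac12=1$. At precision $b_t+2$, the paper's choice, it is at most $\tfrac14+\tfrac12$. There is no reason to truncate here: the inward rounding that secures $\|\tilde v_t\|\le\|v_t\|$ is done by the rounded RVM itself, from any $m$ satisfying \eqref{eq:access}.

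A smaller point: in the bit model you cannot skip the rounds with $x_t=\hat x_t$, because finite-precision reads cannot decide whether $\delta_t=0$. Feed every round in, as the paper does. The reply $v_t=0$ is legal in Theorem~\ref{thm:bits}, since $\|v_t\|\le1$ and $r_t=0$. The game's round index is then exactly $t$, which is the index the cost bounds are stated in.
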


\begin{proof}
Run the rounded RVM with $R=1$. Before seeing $X_t$, take as objective the integer vector $Y$ with
$p_t=\alpha2^{-2b_{t-1}}Y$, or $e_1$ if $Y=0$, recommend the action $\hat x_t\in X_t$ returned by the
linear optimization, and after observing $x_t$ reply $v_t:=\tfrac12\delta_t$ with
$\delta_t:=x_t-\hat x_t$. As $Y$ is a positive multiple of $p_t$, optimality of $\hat x_t$ and $x_t$
gives $\ip{p_t}{\delta_t}\le0\le\ip{\wstar}{\delta_t}$, so $\|v_t\|\le1$,
$r_t=\tfrac12\ip{\wstar-p_t}{\delta_t}\ge0$, and the shortfall is
$\ip{\wstar}{\delta_t}\le\ip{\wstar-p_t}{\delta_t}=2r_t$. Theorem~\ref{thm:bits} then gives
$R_T\le2\Reg_T\le42\sqrt d$; every round, including those with $\delta_t=0$, is a round of the game.
To answer a request of precision $b$, read $x_t$ and $\hat x_t$ at precision $b+2$, receiving $m$
and $\hat m$; then $(m_j-\hat m_j)/8$ is within $\tfrac14$ of $2^{b}v_{t,j}$, and rounding it to a
nearest integer answers \eqref{eq:access} at a cost of $O(db)$.
\end{proof}

\end{document}